\documentclass[12pt]{article}
\usepackage{amsmath}
\usepackage{graphicx}
\usepackage{enumerate}
\usepackage{natbib}
\usepackage{url} % not crucial - just used below for the URL 

\usepackage{subfigure}
\usepackage{graphicx}
\usepackage{multirow} % Include the multirow package
\usepackage{bbm}

\usepackage{booktabs}
\usepackage{titletoc}
\usepackage{titlesec}

\usepackage{mathrsfs}
\usepackage{amsmath,amssymb}
\usepackage{bm}
\usepackage{natbib}
\usepackage[usenames]{color}
\usepackage{amsthm}
\usepackage{arydshln}

\usepackage{multirow} 
\usepackage{enumitem}
\usepackage{caption}
\usepackage{subcaption}
\usepackage{enumitem}
\usepackage{dsfont}
\usepackage{mathtools}
\usepackage{caption}
\usepackage{booktabs}

\usepackage{hyperref}
% \usepackage[colorlinks,
% linkcolor=red,
% anchorcolor=blue,
% citecolor=blue
% ]{hyperref}

\usepackage{mylatexstyle}

\usepackage{cleveref}
%\pdfminorversion=4
% NOTE: To produce blinded version, replace "0" with "1" below.

% \usepackage{setspace}
% %\setstretch{1.5}
% \usepackage[left=1in, right=1in, top=1in, bottom=1in]{geometry}

\newcommand{\blind}{1}

% DON'T change margins - should be 1 inch all around.
\addtolength{\oddsidemargin}{-.5in}%
\addtolength{\evensidemargin}{-1in}%
\addtolength{\textwidth}{1in}%
\addtolength{\textheight}{1.7in}%
\addtolength{\topmargin}{-1in}%

\usepackage{xcolor}

\ifdefined\final
\usepackage[disable]{todonotes}
\else
\usepackage[textsize=tiny]{todonotes}
\fi
\setlength{\marginparwidth}{0.8in}

\allowdisplaybreaks

\begin{document}

\def\spacingset#1{\renewcommand{\baselinestretch}%
{#1}\small\normalsize} \spacingset{1}

%%%%%%%%%%%%%%%%%%%%%%%%%%%%%%%%%%%%%%%%%%%%%%%%%%%%%%%%%%%%%%%%%%%%%%%%%%%%%%

\if1\blind
{
  % \title{\bf Transformers Learn Heterogeneous Bayesian Networks: Inferring Architectures via Pre-Training, and Estimating Parameters from Context}
  \title{\bf Transformers Simulate MLE for Sequence Generation in Bayesian Networks}
  \author{\hspace{.2cm} Yuan Cao\thanks{Equal Contribution.}~\thanks{The University of Hong Kong, \url{yuancao@hku.hk}},\quad
    Yihan He\footnotemark[1]~\thanks{Princeton University, \url{{yihan.he, jqfan}@princeton.edu}},\quad 
    Dennis Wu\thanks{Northwestern University, \url{{yu-hsuanwu2024,hong-yuchen2029,hanliu}@northwestern.edu} },\quad Hong-Yu Chen\footnotemark[4],\\\hspace{.2cm}
    Jianqing Fan\footnotemark[3],\quad and\quad Han Liu\footnotemark[4]} 
  \date{}
  \maketitle
} \fi

\if0\blind
{
  \bigskip
  \bigskip
  \bigskip
  \begin{center}
    {\LARGE\bf }
\end{center}
  \medskip
} \fi

\bigskip
\begin{abstract}
Transformers have achieved significant success in various fields, notably excelling in tasks involving sequential data like natural language processing. Despite these achievements, the theoretical understanding of transformers' capabilities remains limited. In this paper, we investigate the theoretical capabilities of transformers to autoregressively generate sequences in Bayesian networks based on in-context maximum likelihood estimation (MLE). Specifically, we consider a setting where a context is formed by a set of independent sequences generated according to a Bayesian network. We demonstrate that there exists a simple transformer model that can (i) estimate the conditional probabilities of the Bayesian network according to the context, and (ii) autoregressively generate a new sample according to the Bayesian network with estimated conditional probabilities. We further demonstrate in extensive experiments that such a transformer does not only exist in theory, but can also be effectively obtained through training. Our analysis highlights the potential of transformers to learn complex probabilistic models and contributes to a better understanding of large language models as a powerful class of sequence generators.
\end{abstract}

\noindent%
{\it Keywords:} Transformer; Sequence generation; Bayesian networks; Maximum likelihood estimation
\vfill

\newpage
\spacingset{1.9} % DON'T change the spacing!

% especially in learning sequential data such as natural language processing tasks.

 % We further demonstrate that the conditional probability estimation and the autoregressive generation by the transformer model are both optimal.

% the goal is to generate a new sample according to the 

% (i) estimate the conditional probabilities of the Bayesian network according to the context, and (ii) autoregressively generate a new sample according to the Bayesian network with estimated conditional probabilities. 

% autoregressively generate a new sample from the Bayesian network based on the 

% another partially observed sample of the ba

% we aim to use a transformer model

% we show that, for any Bayesian network architecture, there exists a two-layer transformer model that 

% can optimally estimate the conditional probabilities associated to the Bayesian network according to observed sequences of variables in the context. Moreover, this transformer also serves as an autoregressive genera

% generating samples based on partially observed variables in the query in an autoregressive manner.

% understanding by leveraging their ability to capture long-range dependencies and contextual information. 
\section{Introduction}
Transformers \citep{vaswani2017attention} have achieved tremendous success across various fields. These models are known to be particularly strong in terms of sequence generation, and have revolutionized the way we approach problems related to text generation, translation, and scientific discoveries such as protein generation. Despite these achievements, there remains limited understanding of the theoretical capabilities of transformers as sequence generators.

% have revolutionized the way we approach problems related to text generation, translation, as well as scientific discoveries such as protein generation. Despite these achievements, there remains limited understanding of the theoretical capabilities of transformers as a sequence generator.

% have revolutionized the way we approach problems related to text generation, translation, and scientific discoveries such as protein generation. Despite these achievements, there remains limited understanding of the theoretical capabilities of transformers as sequence generators.

To theoretically understand how transformers efficiently generate sequences, several recent works have studied the the power of transformers in learning specific probability models for sequential data \citep{ildizself,rajaramantransformers,makkuva2024attention,nichani2024transformers,edelman2024evolution,chen2024unveiling}. Specifically, \citet{ildizself} studied the problem of learning Markov chains with a one-layer self-attention model, and developed identifiability and convergence guarantees under certain conditions. \citet{rajaramantransformers} studied the behavior of transformers on data drawn from 
$k$-gram Markov processes, where the conditional distribution of the next variable in a sequence depends on the previous $k$ variables, and showed that such processes can be learned well by transformers of a constant-order depth. \citet{makkuva2024attention} further studied the loss function landscape of one-layer transformers in learning Markov chains. \citet{edelman2024evolution} empirically studied the training dynamics of two-layer transformer models in learning bigram Markov chains in context, and discussed how the results generalize to learning $n$-gram Markov chains. \citet{chen2024unveiling} established theoretical  guarantees on how two-layer transformers can be trained by gradient flow to perform in-context learning on $n$-gram Markov chain data. 
\citet{nichani2024transformers} studied a setting where the tokens consist of multiple sequences of samples generated from a causal network, and demonstrated that transformers can be trained to learn the causal network structure so that, when seeing a new context-query pair, it can generate prediction according to the learned causal structure and the context. However, \citet{nichani2024transformers} mostly focused on the setting where each variable has at most one parent.
% \citet{nichani2024transformers} 

% Another closely related line of works \citep{akyurek2022learning,zhang2023trained,bai2023transformers,huang2023context}  studied the power of transformers in solving \text{in-context learning} tasks. Specifically, \citet{akyurek2022learning,zhang2023trained,bai2023transformers,huang2023context} theoretically studied how transformers can perform in-context linear regression under the setting that the context consists of a training data set and the query token contains a test data for prediction. 

% More recently, \citet{nichani2024transformers} studied how transformers can learn causal structures. Specifically, under the assumption that the tokens consist of multiple sequences of samples generated from a causal network, \citet{nichani2024transformers} demonstrated that gradient descent can pre-train a transformer to learn the causal network structure.
% By doing so, when transformer sees a new context-query pair, it can generate prediction according to the learned network structure and the context. However, the analysis in \citet{nichani2024transformers} was mostly limited to the setting where each variable has at most one parent.

% In this paper, we aim to theoretically investigate the capability of transformers to autoregressively learn Bayesian networks in-context. 

In this work, we aim to give an in-depth analysis of transformers in sequence generation. 
% Instead of considering particular models such as Markov chains or ``single-parent'' probability models, 
Specifically, we consider the setting where the relationship among the sequential variables is characterized by a \textit{Bayesian network}, which covers the Markov chains and the ``single-parent'' probability models consider in previous works \citep{ildizself,rajaramantransformers,makkuva2024attention,nichani2024transformers} as special cases. In addition, inspired by the setting in \citet{nichani2024transformers} as well as the recent studies of ``in-context'' learning capabilities of transformers \citep{akyurek2022learning,zhang2023trained,bai2023transformers,huang2023context}, we also consider the case of in-context maximum likelihood estimation. Instead of acting as a simple generator of a fixed distribution, we require the transformer to adapt to new contexts. To fulfill this task, the transformer model must take a 'contextual dataset' as its input, \textit{perform the MLE algorithm} on the input dataset, and then base the sequence generation on the result of the MLE. Despite this complex setup, our analysis demonstrates that simple transformer models are capable of performing this task. 
% Under this challenging setting, we demonstrate that transformers can perform maximum likelihood estimation (MLE) in-context, and then autoregressively generate new sequences accordingly. 
% focus on a specific setting where a collection of independent samples generated from a Bayesian network  are observed and form a context, and our goal is to investigate the capability of  transformers to autoregressively learn Bayesian networks in-context. 
The main contributions of this paper are two-fold: providing clean and intuitive theoretical analyses, and presenting robust experimental studies. Specifically, our contributions can be summarized as follows.
\begin{itemize}[leftmargin = *]
    \item Theoretically, we demonstrate the existence of a transformer model that is capable of: (i) performing MLE for the conditional probabilities of the Bayesian network given the context, and (ii) autoregressively generating a new sequence based on these estimated conditional probabilities. This gives an intuitive demonstration on the capability of transformers to perform complicated sequence generation tasks.
    \item Empirically, we perform extensive experiments to validate our theoretical claims. Specifically, under various settings where the Bayesian network is a (Markov) chain, a tree, or a general graph, we demonstrate that a transformer can indeed be pre-trained from scratch, so that it can perform in-context estimations of conditional probabilities, and help sample a new sequence of variables accordingly. We also present real-data experiment results to further back up our conclusion in more practical settings.  
\end{itemize}

\textbf{Notations.} We use lowercase letters to denote scalars and boldface lowercase/uppercase letters to denote vectors/matrices, respectively. For a matrix $\Ab$, we use $\| \Ab \|_2$ to denote its spectral norm. For an integer $n$, we denote $[n] = { 1, 2, \ldots, n }$. For a set $S$, we use $|S|$ to denote its cardinality. We also use $\ind[\cdot]$ to denote an indicator function that equals $1$ when the corresponding statement is true and equals $0$ otherwise.

% \citep{cao2019generalization}

% ICL survey: \citep{dong2022survey}
% ICL theory: \citep{von2023transformers, nichani2024transformers, shen2023pretrained, ahn2024transformers, li2023transformers, wies2024learnability}
% ICL empirical analysis: \citep{garg2022can, wei2023larger, zhang2023trained, zhou2023algorithms, grazzi2024mamba, park2024can}

\section{Related Work}
\noindent\textbf{Transformers.}
Transformers \cite{vaswani2017attention} and its variants have demonstrated its success in various of domains such as language \cite{devlin2018bert, liu2019roberta, raffel2020exploring, touvron2023llama, achiam2023gpt}, vision \cite{dosovitskiy2020image, jia2022visual, liu2021swin, peebles2023scalable}, multi-modality \cite{gal2022image, radford2021learning, li2023blip} etc.
Large language models (LLMs) demonstrate remarkable ability to learn tasks in-context during inference, bypassing the need to update parameters \cite{brown2020language, lampinen2022can, khandelwal2018sharp}.
However, the understanding of the inner mechanisms of these models, and how they perform such complex reasoning tasks largely remain undiscovered \citep{dong2022survey}.
Such disadvantage prevents us to interprete why transformers often struggles to generalize well under out-of-distribution scenarios, especially on simple reasoning and logical tasks such as arithmetic \citep{magister2022teaching, touvron2023llama, ebrahimi2020can, suzgun2022challenging}.
This raise a doubt on how and when can transformers learn the appropriate algorithms to solve tasks or not.

\noindent\textbf{In-Context Learning.}
Recently, a line of work studies transformers through the lens of in-context learning (ICL), the ability of models to generate predictions based on a series of examples.
Empirically, recent studies find out transformers are capable of learning a series of functions in-context \citep{garg2022can, wei2023larger, zhang2023trained, zhou2023algorithms, grazzi2024mamba, park2024can, akyurek2022learning},
showing transformers can learn to approximate a wide range of algorithms.
Theoretically, \citet{akyurek2022learning,zhang2023trained,huang2023context} studied how transformers can perform in-context linear regression under the setting that the context consists of a training data set and the query token contains a test data for prediction. 
Several works also analyze the algorithmic approximation perspective of transformers under various of conditions \citep{von2023transformers, nichani2024transformers, shen2023pretrained, ahn2024transformers, li2023transformers, wies2024learnability}.
A recent work \cite{von2023transformers} shows that linear transformers \citep{katharopoulos2020transformers} are capable of performing gradient descent based on in-context examples.
In \citep{bai2023transformers}, they not only show ReLU transformers are capable of approximating gradient descent with small error, but can also capable of implementing more complex ICL processes involving \textit{in-context algorithm selection}.
To the best of authors' knowledge there is no existing literature that theoretically and empirically shows transformers learn to perform maximum likelihood estimation in-context for Bayesian network data.

% \section{Learning Bayesian networks with Transformers}

\section{Problem Setup}
In this section, we introduce the sequence generation task we consider, and discuss how we consider using a transformer model to handle this task.

\subsection{Sequence Generation and Bayesian Networks}
The specific sequence generation task we consider can be formulated by Bayesian networks. A Bayesian network is a probabilistic graphical model which specifies the conditional dependencies among the variables by a directed acyclic graph. Each node of the Bayesian network represents for a random variable, and the edges connected to a node indicates the ``parent(s)'' and ``child(ren)'' of the node. Furthermore, Bayesian networks modeling discrete random variables can be parameterized by parameters that form \textit{conditional probability tables}, which define the conditional distribution of each random variable given its parent(s). 

% briefly review the definitions of Bayesian network models, and formally define the problem we consider.

% \textbf{The sequence generation task.} 

Suppose that $X_1, \ldots, X_M$ are a sequence of $M$ discrete random variables following a certain distribution. It is a classic result that, there always exists a Bayesian network modeling the joint distribution of $X_1, \ldots, X_M$ such that $X_{1}$ is a 'root' variable with no parents, and for any $i \in [M]$, the parents of $X_{i}$ are all among $X_{1}, \ldots, X_{i-1}$. In addition, there exists a unique Bayesian network satisfying these properties, in which each variable has the smallest number of parents. We denote this Bayesian network as $\cB$. 
Our goal is to generate a new sequence of realizations of $X_1, \ldots, X_M$ according to $\cB$. However, we suppose that the conditional probability tables, i.e., the parameters of $\cB$, are unknown. Instead, we are given $N$ independent groups of observations $X_{1i}, \ldots. X_{Mi}$, $i=1,\ldots, N$ generated according to $\cB$.

\subsection{MLE and Autogregressive Generation by Transformers}

% In this work, we consider the problem of learning the conditional probability tables of such a Bayesian network $\cB$ in-context. Specifically, suppose that we are given $N$ groups of context observations $X_{1i}, \ldots. X_{Mi}$, $i=1,\ldots, N$ that are independently generated according to $\cB$. Then, 

We study the capability of transformers to autoregressively sample a new sequence  $X_{1q}, \ldots. X_{Mq}$ based on conditional probability tables estimated from the context.

% learn the conditional probability tables and to sample new variables accordingly. 

Suppose that the discrete random variables $X_1, \ldots. X_M$ takes $d$ possible values. 
 % $m$-th variable in the $i$-th observation
For $i\in[N]$ and $m\in[M]$, denote by $\xb_{mi}$ the $d$-dimensional one-hot vector of the observation $X_{mi}$. 
Moreover, suppose that at a certain step during the autoregressive generation process, some variables among $X_1, \ldots, X_M$ have been generated, and the goal is to generate the next variable. We define the query sequence $\xb_{1q},\ldots, \xb_{Mq}$ as follows: 
% $\xb_{mq}$ denotes the value of the $m$-th variable in the query: 
\begin{itemize}[leftmargin = *]
    \item If $X_{mq}$ is already sampled, then $\xb_{mq}$ is the one-hot vector representing the obtained value.
    \item If $X_{mq}$ is not sampled, then $\xb_{mq}$ is a zero vector.
\end{itemize}
Suppose that at the current step, the target is to sample $X_{m_0 q}$. We define  additional vectors
\begin{align}\label{eq:pos_embedding}
    \pb = [\mathbf{0}_{d(m_0-1)}^\top, \mathbf{1}_d^\top, \mathbf{0}_{d(M - m_0+1)}^\top ]^\top,~ \pb_q = [\mathbf{0}_{d(m_0-1)}^\top, \mathbf{1}_d^\top, \mathbf{0}_{d(M - m_0)}^\top, \mathbf{1}_d ]^\top \in \RR^{(M+1)d}.
\end{align}
The definition of $\pb$ and $\pb_q$ serves two purposes. First of all, they can teach an autoregressive model the current variable-of-interest. Moreover, the difference bewteen $\pb$ and $\pb_q$ also serves as an indicator of the ``query'' variable in the input. Based on these definitions, we define 
\begin{align}\label{eq:input_mat}
    \Xb = \begin{bmatrix}
        \xb_{11} & \xb_{12} & \cdots & \xb_{1N} & \xb_{1q}  \\
        \xb_{21} & \xb_{22} & \cdots & \xb_{2N} & \xb_{2q}\\
        \vdots & \vdots & & \vdots & \vdots\\
        \xb_{M1} & \xb_{M2} & \cdots & \xb_{MN}  & \xb_{Mq}\\
        \pb & \pb & \cdots & \pb &\mathbf{p}_{q}
    \end{bmatrix},
\end{align}
The matrix $\Xb$ can then be directly fed into a transformer model whose output aims to give the estimated distribution of $X_{m_0 q}$ as a $d$-dimensional vector that sums to one. If such a transformer model exists, then the autoregressive sampling process can be achieved according to Algorithm~\ref{alg:sampling}. The major goal of this paper is to investigate whether transformers can handle such tasks well. 

\begin{algorithm}[ht]
\caption{Autoregressive Sampling}
\begin{algorithmic}[1]\label{alg:sampling}
    \STATE \textbf{input:} Observations $\{\xb_{mi}: m\in [M], i\in [N]\}$, model  $\fb:\RR^{(2M+1)d \times (N+1)} \rightarrow \RR^{d}$.
    \STATE Initialize $\xb_{mq} = \mathbf{0}_{d}$ for $m\in[M]$. 
    \FOR {$m_0=1$ \textbf{to} $ M$}
        \STATE Set $\pb$ and  $\pb_q$ according to \eqref{eq:pos_embedding}, and define $\Xb$ according to \eqref{eq:input_mat}. 
        % \STATE Define $\Xb$ according to \eqref{eq:input_mat}.
        \STATE Sample $X_{m_0 q}$ according to $\fb(\Xb)$, and update $\xb_{m_0 q}$ as the corresponding one-hot vector.
    \ENDFOR
\end{algorithmic}
\end{algorithm}

\textbf{Maximum likelihood estimation of conditional probabilities.} To measure the performance of transformers, we consider comparing the output of the transformer with the optimal conditional distribution estimation given by maximizing the likelihood. For discrete random variables, it is will-known that the maximum likelihood estimation is obtained by \textit{frequency counting}. Specifically, suppose that at a certain step in the autoregressive sampling procedure, the model is aiming to sample the $m_0$-th variable. Denote by $\cP(m_0)$ the set consisting of the indices of the parents of $X_{m_0}$. Then, the sampling probability vector $\pb^{\mathrm{MLE}}_{m_0}\in \RR^d$  given by MLE is
\begin{align*}
    [\pb^{\mathrm{MLE}}_{m_0}]_j = \frac{ | \{ i\in [N]: X_{m_0i} = j, \text{ and } X_{mi} = X_{mq} \text{ for all } m\in \cP(m_0) \} | }{ | \{ i\in [N]: X_{mi} = X_{mq} \text{ for all } m\in \cP(m_0) \} | }.
\end{align*}
Further by the fact that $\xb_{mi}$'s and  $\xb_{mq}$'s are one-hot vectors, we can also write
\begin{align*}
    \pb^{\mathrm{MLE}}_{m_0} = \sum_{i\in[N]}  \frac{ \ind[  \xb_{mi} = \xb_{mq} \text{ for all }m\in \cP(m_0) ] }{| \{ i\in [N]: \xb_{mi} = \xb_{mq} \text{ for all }m\in \cP(m_0) \}|} \cdot \xb_{m_0i}.
\end{align*}
To compare a function output $\fb\in \RR^d$ with the MLE solution above, we consider the total variation distance between the two corresponding distributions. Specifically, if $\fb$ is a distribution vector (i.e., $\fb \in \RR_+^d$ and $\sum_{i=1}^d \fb_i = 1$), then we define
\begin{align*}
    \mathrm{TV}(\fb, \pb^{\mathrm{MLE}}_{m_0} ) : = \frac{1}{2}\sum_{j=1}^d | [\fb]_j - [\pb^{\mathrm{MLE}}_{m_0}]_j|.
\end{align*}
% the model  
% $\pb_q = [\mathbf{0}_{d(m_0-1)}^\top, \mathbf{1}_d^\top, \mathbf{0}_{d(M - m_0)}^\top ]^\top \in \RR^{Md}$ that teaches an autoregressive model the current variable-of-interest. Based on these definitions, we define 

% This vector $\pb_q$ is then attached to all the tokens defined by 

% serves as a positional embedding. 
%$\pb_q = [\mathbf{0}_d^\top, , \ldots, \mathbf{0}_d^\top, \mathbf{1}_d^\top, \mathbf{0}_d^\top, \ldots \mathbf{0}_d^\top]^\top \in \RR^{Md\times 1}$

% $m\in [M]$ and $i\in[N]$,

% where $\mathbf{p}_{q} = [\mathbf{0}_d^\top, , \ldots, \mathbf{0}_d^\top, \mathbf{1}_d^\top, \mathbf{0}_d^\top, \ldots \mathbf{0}_d^\top]^\top \in \RR^{Md\times 1}$ is a position embedding where  

% By gradually adding more variables to the model starting from $X_1$,

% a Bayesian network that models the joint distribution of $M$ random variables $X_1, X_2,\ldots. X_M$. Without loss of generality, we assume that $X_1, X_2,\ldots. X_M$ are ordered such that the 

% the joint distribution of multiple random variables based on a Given 

\section{Main theory}
% \section{Transformers learn Bayesian networks in-context}
% In this section, we present our main result.
We consider standard transformer architectures introduced in \citet{vaswani2017attention} that consists of self-attention layers and feed-forward layers with skip connections. Specifically, in our setup, an attention layer with parameter matrices $\Vb\in \RR^{(2M+1)d\times (2M+1)d}, \Kb \in \RR^{Md\times (2M+1)d}, \Qb \in \RR^{Md\times (2M+1)d}$ is defined as follows:
\begin{align}\label{eqn:attn}
    \mathrm{Attn}_{\Vb, \Kb, \Qb} (\Xb) = \Xb + \Vb \Xb \mathrm{softmax}[ (\Kb \Xb)^\top  (\Qb \Xb)  ],
\end{align}
where $\mathrm{softmax}$ denotes the column-wise softmax function. Here we also consider skip connections, which are commonly implemented in practice. In addition, a feed-forward layer with skip connections and parameter matrices $\Wb_1,\Wb_2 \in \RR^{(2M+1)d\times (2M+1)d}$ is defined as follows:
\begin{align}\label{eqn:ffl}
    \mathrm{FF}_{\Wb_1,\Wb_2} ( \Xb ) = \Xb + \Wb_2 \sigma( \Wb_1 \Xb ), 
\end{align}
where $\sigma(\cdot)$ denotes the entry-wise activation function. We consider the ReLU activation function $\sigma(z) = \max\{0,z\}$. Given the above definitions, we follow the convention in \citet{bai2023transformers} and call the following mapping a ``transformer layer'':
\begin{align*}
    \mathrm{TF}_{\btheta}( \Xb ) = \mathrm{FF}_{\Wb_1,\Wb_2} [ \mathrm{Attn}_{\Vb, \Kb, \Qb} (\Xb) ],
\end{align*}
where $\btheta = (\Vb,\Kb, \Qb,\Wb_1,\Wb_2 )$ denotes the collection of all parameters in the self-attention and feed-forward layer.

The above specifies the definition of a transformer layer, which is a mapping from $\RR^{(2M+1)d\times (N+1)}$ to $\RR^{(2M+1)d\times (N+1)}$ (for any $N\in \NN_+$). To handle the task of generating $d$-class categorical variables, we also need to specific the output of the model, which maps matrices in $\RR^{(2M+1)d\times (N+1)}$ to vectors in $\RR^{d}$. Here we follow the  common practice, and define the following $\mathrm{Read}(\cdot)$ function
\begin{align}\label{eqn:read}
    \mathrm{Read}(\Zb) := \Zb \eb_{N+1} \text{ for all } \Zb\in \RR^{(2M+1)d\times (N+1)}
\end{align}
to output the last column of the input matrix, and consider a linear mapping $\mathrm{Linear}_{\Ab}(\cdot)$ to convert the output of the  $\mathrm{Read}(\cdot)$ function to the final distribution vector:
\begin{align*}
    \mathrm{Linear}_{\Ab}(\zb) = \Ab\zb \text{ for all } \zb \in \RR^{(2M+1)d},
\end{align*}
where $\Ab \in \RR^{d\times(2M+1)d }$ is the paramter matrix of the linear mapping. 
% Following common practice, we also consider 

Given the above definitions, we are ready to introduce our main theoretical results, which are summarized in the following theorem.

\begin{theorem}\label{thm:main}
    For any $\epsilon > 0$, and any Bayesian network $\cB$ with maximum in-degree $D$, there exists a two-layer transformer model
    \begin{align*}
    \fb(\Xb) =  \mathrm{Linear}_{\Ab} [\mathrm{Read} ( \mathrm{TF}_{\btheta^{(2)}}( \mathrm{TF}_{\btheta^{(1)}}( \Xb ) ) ) ]
    \end{align*}
    with parameters satisfying
    % $\|\Vb^{(1)} \|_2, \|\Kb^{(1)} \|_2, \|\Qb^{(1)} \|_2, \|\Wb_2^{(1)} \|_2 \leq 1$ and $ \|\Wb_1^{(1)} \|_2\leq 2\sqrt{D+1}$
    % $\|\Vb^{(2)} \|_2, \|\Wb_1^{(2)} \|_2, \|\Wb_2^{(2)} \|_2  \leq 1$ and $ \|\Kb^{(2)} \|_2, \|\Qb^{(2)} \|_2\leq 3\log(MdN/\epsilon)$,
    \begin{align*}
        &\|\Vb^{(1)} \|_2, \|\Kb^{(1)} \|_2, \|\Qb^{(1)} \|_2, \|\Wb_2^{(1)} \|_2, \|\Vb^{(2)} \|_2, \|\Wb_1^{(2)} \|_2, \|\Wb_2^{(2)} \|_2, \|\Ab\|_2 \leq 1,\\
        &\|\Wb_1^{(1)} \|_2\leq 2\sqrt{D+1},  \|\Kb^{(2)} \|_2, \|\Qb^{(2)} \|_2\leq 3\log(MdN/\epsilon),
    \end{align*}
    % and a linear mapping satisfying $\|\Ab\|_2 \leq 1$,
    such that for any $m_0\in [M]$ and  $\pb$, $\pb_q$ defined according to $m_0$, it holds that $\fb(\Xb)$ is a probability vector, and
    \begin{align*}
        \mathrm{TV}\{ \fb(\Xb) , \pb_{m_0}^{\mathrm{MLE}} \} \leq \epsilon.
    \end{align*}
\end{theorem}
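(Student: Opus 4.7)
The plan is to explicitly construct the weights of a two-layer transformer that implements the ``match-and-average'' structure inherent in the frequency-counting formula for $\pb^{\mathrm{MLE}}_{m_0}$. Layer~1 is used mainly as a per-column preprocessor via its feed-forward network: it writes two quantities into the residual stream of each column $i$, namely a parent-masked vector $\tilde{\xb}_i\in\RR^{Md}$ with blocks $\xb_{mi}\cdot\ind[m\in\cP(m_0)]$, and the target-variable value $\hat{\xb}_i := \xb_{m_0, i}\in\RR^d$. Both depend on the current target $m_0$, but the (fixed) weights have access to $m_0$ through each column's position vector $\pb$: the block of $\mathbf{1}_d$ at position $m_0$ functions as a scaled one-hot encoding. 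Consequently, the indicators $\ind[m\in\cP(m_0)]$ and $\ind[m=m_0]$ are linear functions of $\pb$, and the products $[\xb_{mi}]_j\cdot\ind[\cdot]$ (AND of two $\{0,1\}$-valued features) are produced by the ReLU via the identity $\mathrm{AND}(a,b)=\sigma(a+b-1)$, where the additive constant $-1$ is synthesized from the fixed linear combination $(1/d)\mathbf{1}_d^\top(\text{sum of blocks }1,\ldots,M\text{ of }\pb)$, which is identically one. The attention head in layer~1 can be set to be trivial, e.g.\ $\Vb^{(1)}=\mathbf{0}$.

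Layer~2's attention head implements the soft matching and averaging. Setting $\Kb^{(2)}$ and $\Qb^{(2)}$ to extract the pre-computed masked block $\tilde{\xb}$, scaled by a large temperature $\tau$, yields
\begin{align*}
(\Kb^{(2)}\Xb_i)^\top(\Qb^{(2)}\Xb_{N+1}) \;=\; \tau\,\tilde{\xb}_i^\top\tilde{\xb}_q \;=\; \tau\sum_{m\in\cP(m_0)}\ind[\xb_{mi}=\xb_{mq}],
\end{align*}
i.e.\ $\tau$ times the number of parents on which context $i$ matches the query. For large $\tau$, the column-wise softmax concentrates on $S:=\{i\in[N]:\xb_{mi}=\xb_{mq}\text{ for all }m\in\cP(m_0)\}$, approximating the ideal weights $w_i^\star=\ind[i\in S]/|S|$. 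A penalty term keyed off the query indicator (the last block of $\pb_q$, which equals $\mathbf{1}_d$ for the query and $\mathbf{0}_d$ for contexts) is added to $\Kb^{(2)}$ and $\Qb^{(2)}$ so that the query column's score is reduced by $\Omega(\tau)$, effectively excluding it from the attention. The value map $\Vb^{(2)}$ extracts $\hat\xb_i=\xb_{m_0,i}$ for context columns and additionally places the uniform correction $(1/d)\mathbf{1}_d$ in the same output block for the query column (again using the query indicator). This correction absorbs the small amount of leaked softmax mass at the query column and guarantees that $\fb(\Xb)$ is \emph{exactly} a probability vector. The FF of layer~2 is trivial, and $\mathrm{Linear}_\Ab$ reads out the relevant $d$ rows.

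The main obstacle is the quantitative total variation bound. A direct calculation of the softmax weights shows that, with $\tau=\Theta(\log(MdN/\epsilon))$, the matching columns satisfy $w_i=1/|S|+O(e^{-\tau})$, the non-matching context columns have total weight $O(Ne^{-\tau})$, and the query column has weight $O(e^{-\tau})$. Combining these bounds via the triangle inequality gives $\mathrm{TV}(\fb(\Xb),\pb^{\mathrm{MLE}}_{m_0})\leq O(Ne^{-\tau})\leq\epsilon$, precisely matching the constraint $\|\Kb^{(2)}\|_2,\|\Qb^{(2)}\|_2\leq 3\log(MdN/\epsilon)$ in the statement. The bound $\|\Wb_1^{(1)}\|_2\leq 2\sqrt{D+1}$ is obtained by a careful arrangement of the feed-forward weights so that each hidden feature combines at most one entry of $\xb_{mi}$ with only $O(D)$ entries encoding the parent structure from $\pb$ (rather than the naive $O(M)$), giving row $\ell_2$-norms of order $\sqrt{D+1}$ and a corresponding spectral norm bound via the block structure of the resulting matrix. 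The remaining norm constraints are routinely satisfied by the simple masking, read-out, and query-indicator extraction matrices.
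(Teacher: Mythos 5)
Your construction matches the paper's proof in all essential respects: a first layer with trivial attention whose ReLU feed-forward network uses the positional block to gate out all non-parent variables, followed by a second layer whose attention computes parent-match counts at temperature $\Theta(\log(MdN/\epsilon))$ so that the column-wise softmax concentrates uniformly on the matching contexts (with the query column suppressed via the $\pm\mathbf{1}_d$ indicator blocks), recovering the frequency-counting MLE up to $O(Ne^{-\tau})$ error. The only deviations are cosmetic implementation choices --- the $\sigma(a+b-1)$ AND gate in place of the paper's subtract-the-ReLU-complement trick, and the uniform $(1/d)\mathbf{1}_d$ correction at the query column, which is in fact a slightly cleaner way to guarantee the output is exactly a probability vector than the paper's marker-block-and-sign-flip readout.
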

Theorem~\ref{thm:main} shows that there exists a two-layer transformer with an appropriate linear prediction layer such that, for any variable of interest $X_{m_0}$, the transformer can always output a distribution vector that is close to the maximum likelihood estimation $\pb_{m_0}^{\mathrm{MLE}}$ in total variation distance. Importantly, for Bayesian networks with bounded maximum in-degrees, the transformer we demonstrate has weight matrices with bounded (up to logarithmic factors) spectral norms, showcasing that despite the complex nature of the task, it can be handled well by transformers with ``bounded complexities''.  This provides strong evidence of the efficiency  of transformers in learning Bayesian networks in-context.

% \citep{bai2023transformers}

A notable pattern of the result in Theorem~\ref{thm:main} is that it demonstrates the capability of transformers to generate a sequence of variables in an autoregressive manner -- the parameters of the transformer do not depend on the index of the variable of interest $m_0$, and the same transformer model works for all $m_0\in [M]$ as long as the vectors $\pb$, $\pb_q$ appropriately defined according to $m_0$. This means that, the transformer model $ \fb(\Xb)$ can be utilized in the autoregressive sampling procedure in Algorithm~\ref{alg:sampling}, such that at each step, the transformer always sample the corresponding variable with close-to-optimal distributions.
% \begin{align*}
%     \fb(\Xb) =  \mathrm{Linear}_{\Ab} [\mathrm{Read} ( \mathrm{TF}_{\btheta^{(2)}}( \mathrm{TF}_{\btheta^{(1)}}( \Xb ) ) ) ]
% \end{align*}

% \begin{lemma}
%     There exists a self-attention layer
%     \begin{align*}
%      \Xb \mathrm{softmax} (\mathrm{Mask}(\Xb^\top \Wb\Xb)) 
%     \end{align*}
%     that fills the unobserved root entries in $\xb_{\cdot q}$ with the frequency vectors, while all the other entries in $\Xb$ are unchanged. 
% \end{lemma}

\section{Experiments}
The main paper contains four parts of the experiments.
First, we verify our theoretical results by studying the capabilities of transformers learning Bayesian networks on synthetic datasets.
Second, we analyze whether trained transformers are capable of generalizing to different value of $N$.
Next, we perform an analysis on whether our theoretical construction is optimal.
Last, we conduct a case study on the ACSIncome dataset.
In the appendix, we show the impact of different parameters on model performance.

\subsection{Transformers Perform MLE based on Bayesian Network Architecture}\label{exp:learn-bi}
Here we conduct the experiment of training transformers to perform MLE based on Bayesian Network Architecture.
We also visualize their convergence result with loss and accuracy curves.
\begin{figure}[t]
    \centering
\includegraphics[width=0.95\linewidth]{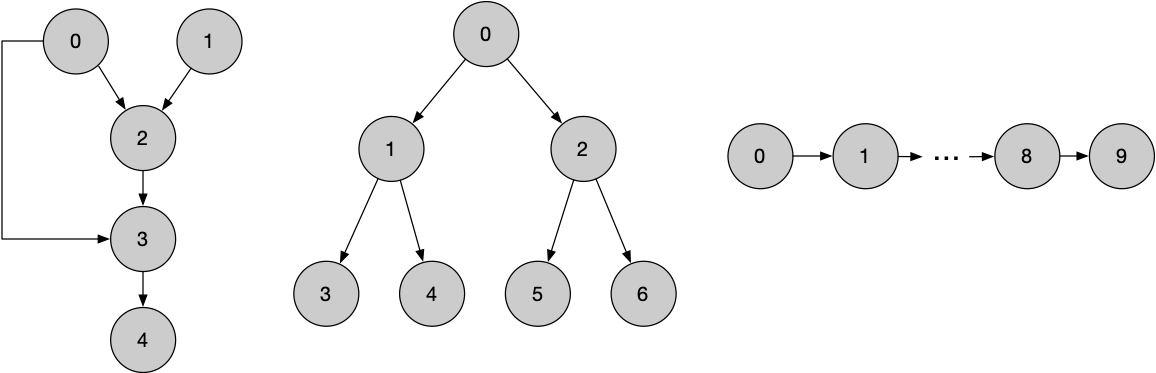}
    \caption{\textbf{Illustrations of graph structures in the experiments.}
    Left to right:
    general graph, tree and chain.
    The curriculum follows the number order of variables.
    The arrow indicates the causal relationships between variables.
    % Note that for general graph, variable 2, 3 both have 2 parents.
    % However, modeling variable 0, 1 is identical for naive inference and MLE based on true network, and is \textbf{NOT} for variable 3.
    % For tree, modeling root and variable 1 is identical for naive inference and MLE based on true network.
    % For chain, modeling variable 0, 1 is identical for naive inference and MLE based on Bayesian network architecture.
    }
    \label{fig:graphs-visualization}
\end{figure}

\textbf{Datasets.}
We consider training transformers to learn Bayesian networks of three structures:
chain, tree and general graph, see Figure~\ref{fig:graphs-visualization} for illustration.
All variables in our dataset are with binary values (2 possible outcomes).
For each structure, we generate 50k graphs with randomly initialized probability distributions, and sample all training data from them.
We process the data as following:
Given a Bayesian network structure $\mathcal{B}$, we first randomly select the $m$-th variable to predict.
Next, we randomly generate the parameters of $\mathcal{B}$ (values of conditional probabilities) and sample $N+1$ observations of the first $m$ variables. 
For the $N+1$-th observation, we mask out its $m$-th variable with $0$, and treat the value of this variable as the label of this sample. 
Finally, we encode the $N+1$ samples as in \eqref{eq:input_mat} by concatenating them into a $N+1$ column matrix and adding positional encoding vectors.
% To reduce noise during training, 
To randomly generate the values of conditional probabilities, we evenly sample them from one of the two uniform distributions $U(0.15, 0.3),\; U(0.7, 0.85)$.
By considering such distributions, we aim to avoid two scenarios: 
(i) random guess is already almost `optimal' (happens when the conditional probability is close to 0.5) and 
(ii) variables yield a deterministic relationship (happens when the conditional probabilities of a variable is close to 0 or 1). We sample $S = 64k, 19.2k, 12.8k$ independent contexts for chain, tree and general graphs, respectively. Within each context, we further sample $N+1$ independent observations with $N = 100$.

\textbf{Model.} 
We use a 6-layer transformer as our model.
In each layer, the transformer consists of a feed-forward layer (Equation~\eqref{eqn:ffl}) following by an attention layer (Equation~\eqref{eqn:attn}) and a layer normalization \cite{ba2016layer}.
Each attention layer has 8 heads with hidden dimension of 256, and each feed-forward layer has the hidden dimension of 1024.
We use a readout layer (Equation~\eqref{eqn:read}) to map the output of the transformer to the final distribution of prediction.
The parameters of the transformer is trained via Adam \cite{kingma2014adam}.
For prediction, we use a Softmax function to convert the output of the readout layer into a probability distribution over all possible outcomes (2-dimension if the variable is binary).
A small difference to our theoretical construction is that we use even simpler positional embeddings: we set $\pb\in \RR^{M}$ as zero vectors, and $\pb_q \in \RR^{M}$ a one-hot vector, indicating the variable to predict. 
For Bayesian networks with only binary variables, he input dimension is always 3 times of the number of variables in the graph as our construction in Equation~\ref{eq:input_mat}.
% \CC{(Yuan: We need to present the detailed setup, instead of saying its similar to Bai et al. You can refer to previous equations to explain the transformer model we use in experiments. We need to assume that the reviewers know nothing about transformers at all, except what we have mentioned in our paper.)}
% We follow most settings in \cite{bai2023transformers} by using a transformer with 8 heads and 256 hidden dimension for all experiments, the only difference is we use 6 instead of 12 layer transformers.
% A small difference is we set $\bp\in \RR^{M}$ as zero vectors, and $\bp_q \in \RR^{M}$ a one-hot vector, indicating the variable to predict.
% Note that we remove the positional embedding/encoding from the transformer to highlight the utility of $\bp$ \CC{(Yuan: this sentence is not very clear. We shuold avoid mentioning it. $\bp$ is our position embedding vector.)}.
% The input dimension is always 3 times of the number of variables in the graph as our construction in Equation~\ref{eq:input_mat}.
% We also observe the training, test accuracy throughout the training process.
% Additionally, we observe the confidence difference between the optimal predictor (using the true probability table and graph structure), and the confidence generated by the transformers.

\textbf{Training setup.}
% \CC{(Yuan: again, avoiding mentioning Bai et al. Please explain that we follow the common transformer training methods based on next token prediction, and give formal definitions of the loss function and give more details about the curriculum. I think we may need multiple paragraphs to explain everything, but that is fine. We also need to mention the optimization algorithm we use.)}
We train the transformer on the next token prediction task, a classic training procedure for transformers.
Given a input sequence of length $N$, the transformer predicts the outcome of the $N+1$-th token.
We assume the outcome distribution is discrete and therefore treat it as a classification task.
Let the trained transformer be $\hat{\fb}(\cdot)$.
Given any input $\Xb$, we use a Softmax function to normalize its output $\hat{\fb}(\Xb)$ into a probability distribution over all possible outcomes, i.e.,
\begin{align}\label{eq:transformer-output}
\hat{\yb}
=
\text{Softmax}
\left(
\hat{\fb}(\Xb)
\right),
\end{align}
where $\hat{\yb}$ is a probability distribution over $d$ possible outcomes.
We use $\yb$ to be the true label of $\Xb$, which is a $d$-dimensional one-hot vector indicating the true outcome.
Now following the notation in the dataset paragraph, let the training data contains $S$ input sequences of $N+1$ and $\yb_s$ be the true outcome of the $m$-th variable of the $s$-th input, the transformer is trained the minimize the CrossEntropy loss defined as following
\[
L(\yb, \hat{\yb}) =
-
\frac{1}{S}
\sum_{s=1}^S
 \sum_{i=1}^d [\yb_s]_i \log([\hat{\yb_s}]_i),
\]
where $[\yb_s]_i$ is the $i$-th element of $\yb_s$ and $[\hat{\yb_s}]_i$ is the $i$-th element of $\hat{\yb_s}$

% We process the data as following:
% Given a Bayesian Network structure $\mathcal{B}$, we first randomly select the $m$-th variable to predict.
% Next, we randomly initialize the parameters of $\mathcal{B}$ and sample $N+1$ observations on variables with order less or equal to $m$ ($1 \cdots m$).
% For the $N+1$-th observation, we mask out its $m$-th variable with $0$, and treat as the label of this sample.
% Finally, we encode our $N+1$ samples as Equation~\eqref{eq:input_mat} by concatenating them into a $N+1$ column matrix and adding positional encoding vectors.

We train the transformer with $10k, 3k, 2k$ steps on chain, tree and general graph, respectively.
For each step, the transformer takes a mini-batch of size $64$ as input, and updates its learnable parameters with sample-wise average loss within the mini-batch.
Each mini-batch has $64$ different contexts.
We use the AdamW \citep{loshchilov2017decoupled} optimizer with different learning rate based on the network structure (See Table~\ref{table:hyperparam}).
We use the notation of $N_{\text{train}}$ and $N_{\text{test}}$ as the $N$ used in training and testing, respectively.
We set $N_{\text{train}}=100$ during training, and vary $N_{\text{test}}$ when testing.
% The transformer is trained to minimize the CrossEntropy Loss defined as following.
% Let the trained transformer be $\fb(\cdot)$, we denote the probability distribution output by 
% Let $\hat{y}$ be the output of the transformer, and $y$ be the true label,
% we train the model to minimize the following objective function
% \begin{align*}
% L(y, \hat{y}) = - \sum_{i=1}^d y_i \log(\hat{y}_i),
% \end{align*}
% where $d$ denotes the number of possible outcomes of the label (the $m$-th variable).

% \textbf{Curriculum Design.} 
% \CC{(This part should  be merged to training setup.)}
We take the data-level curriculum approach to train the transformers performing MLE based on Bayesian Network Architecture.
The goal of the curriculum is to lead transformers to learn the whole graph structure well.
We determine the difficulty of the curriculum by the number of variables in the graph.
Therefore, we design the curriculum from easy to hard by revealing more and more variables throughout training.
By doing so, the graph structure ``grows" during training.
We start by revealing only the first two variables in the graph, meaning the transformer will only learn to predict the first 2 variables.
After the training loss reaches to a threshold, we then advance the curriculum by revealing one extra variable.
% Note that after revealing one extra variable, the training data now contains the observation of all revealed variables.

\textbf{Metrics.} 
% \CC{(Yuan: we need to add explanations of the  baseline methods, and explanations about how to calculate test accuracy, like how do we treat the transformer and the baseline methods as prediction models.)}
We denote the number of examples during training as $N_{\text{train}}$, and $N_{\text{test}}$ as the number of examples during evaluation.
For evaluations, we randomly generate 1 graph for each graph structure as testset, denoting as $\cB_{\text{test}}$.
We report the accuracy of transformers, Naive Inference, MLE based on true Bayesian network (we use MLE to represent it in figures), and the optimal accuracy on testset and vary the number of examples in each prediction.

Here we explain the metrics of all baselines and the transformer used in our experiments.
We use an example for predicting the $m_0$-th variable of a query sequence with the first to $(m_0-1)$-th variables $X_{1q}, \dots, X_{(M-1)q}$ being observed.
Following the setup of in-context learning, we assume a set of $N$ groups context observations $X_{1i}, \dots, X_{Mi}$ for $i = 1, \dots, N$.
Both the baselines and the transformer outputs a probability distribution over all possible outcomes.
We then select the outcome with the highest probability as their prediction.
Note that the two baselines are not capable of handling unseen features or labels.
Such a case will lead directly to assigning probability $0$ to all categories.
For the transformer, we obtain its prediction on the $s$-th sample by applying the argmax function on $\hat{\yb}_s$ defined in Equation~\eqref{eq:transformer-output}.
% on 
% to the output of the final readout layer (Equation~\eqref{eqn:read}) as the following
% \[
% P(\xb_{Mq} | \xb_{1q}, \dots, \xb_{(M-1)q}, \; \mathcal{O}, \; \bm{\theta} )
% =
% \text{Softmax}
% \left(
% f(\Xb)
% \right).
% \]
The naive inference method predicts $\xb_{m_0q}$ with the following probability distribution.
\[
\PP(X_{m_0q} | X_{1q}, \dots, X_{(m_0-1)q}) 
= 
\frac{\sum_{i =1}^N \mathbbm{1} \left(X_{1i} = X_{1q}, \dots, X_{(m_0-1)i} = X_{(m_0-1)q} , X_{m_0} = X_{m_0q}\right) }{\sum_{i=1}^N \mathbbm{1} \left(X_{1i} = X_{1q}, \dots, X_{(m_0-1)i} = X_{(m_0-1)q} \right)}.
\]
% where $\mathbbm{1}$ is the indicator function. 
In other words, the naive inference method performs MLE assuming a ``fully-connected'' Bayesian network.

% \paragraph{MLE based on True BN.}
For the last baseline, we perform MLE based on true Bayesian network structure, which we assume the network structure is known. 
% Thus, assuming the parents of the $M$-th variable are in the set of $\mathcal{P}$, where $\mathcal{P}_q$ are the parent nodes of $\xb_{Mq}$,
Specifically, the MLE method predicts $X_{Mq}$ with the following probability distribution.
% \[
% P(\xb_{Mq} | \xb_{1q}, \dots, \xb_{(M-1)q}, \; \mathcal{O}) 
% = 
% \frac{\sum_{X \in \mathcal{O}} \mathbbm{1} \left(X_1 = \xb_{1q}, \dots, X_{M-1} = \xb_{(M-1)q} \right) \cdot \mathbbm{1}(X_{M} = \xb_{Mq})}{\sum_{X \in \mathcal{O}} \mathbbm{1} \left(X_1 = \xb_{1q}, \dots, X_{M-1} = \xb_{(M-1)q} \right)},
% \]
% \[
% P(\xb_{Mq} | \mathcal{P}_q, \; \mathcal{O}) 
% = \frac{\sum_{(X_i, \mathcal{P}_i) \in \mathcal{O}} \mathbbm{1}(\mathcal{P}_i = \mathcal{P}_q) \cdot \mathbbm{1}(X_{M} = \xb_{Mq})}{\sum_{(X_i, \mathcal{P}_i) \in \mathcal{O}} \mathbbm{1}(\mathcal{P}_i = \mathcal{P}_q)},
% \]
% where $\mathcal{P}_i$ is the parent of $X_{Mi}$.
% \[
% P(X_{Mq} | X_{1q}, \dots, X_{(M-1)q}) 
% = 
% \frac{\sum_{i =1}^N \mathbbm{1} \left(X_{mi} = X_{mq}, i\in \cP(m_0)\right) \cdot \mathbbm{1}(X_{M} = X_{Mq})}{\sum_{i=1}^N \mathbbm{1} \left(X_{1i} = X_{1q}, \dots, X_{(M-1)i} = X_{(M-1)q} \right)}.
% \]
\[
\PP(X_{m_0q} | X_{1q}, \dots, X_{(m_0-1)q}) 
= 
\frac{\sum_{i =1}^N \mathbbm{1} \left(X_{mi} = X_{mq}, m\in\cP(m_0)\cup\{m_0\}\right) }{\sum_{i =1}^N \mathbbm{1} \left(X_{mi} = X_{mq}, m\in\cP(m_0)\right)}.
\]
Finally, the optimal accuracy is based on the prediction using the ground truth parameter of $\cB_{\text{test}}$ without the use of any examples.

Note that the optimal accuracy is not 1 due to the probabilistic nature of networks.
For each number of examples $N_{\text{test}} \in [5, 100]$, we randomly sample a set of 1500 observations, with each observation contains $N_{\text{test}}$ ICL examples and 1 test token.
We separate the evaluation of each variable in the graph as they have different optimal accuracy.
The reported accuracy are the average over 10 runs with different random seeds.
Due to space limit, we select 3 variables for each graph structure to present.
For tree, we select one variable for each level from root to leaf.
For chain, we select three variables that are close to the beginning, middle and the end of the chain.
For graph, we select variables that are (1) no parents, (2) 2 parents, but the two parents have no precedents, and (3) 2 parents, and parents have other precedents.
This setting makes (2) identical for naive inference and MLE based on Bayesian Network Architecture, and (3) will present the difference.
Experimental details are in Appendix~\ref{experiment-details}.

\begin{figure}[h]
    \centering
    \begin{subfigure}
        \centering
        \includegraphics[width=0.31\linewidth]{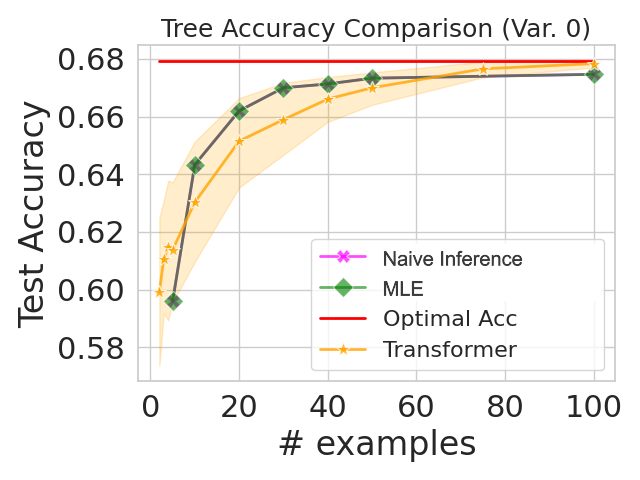}%
        \hfill\includegraphics[width=0.31\linewidth]{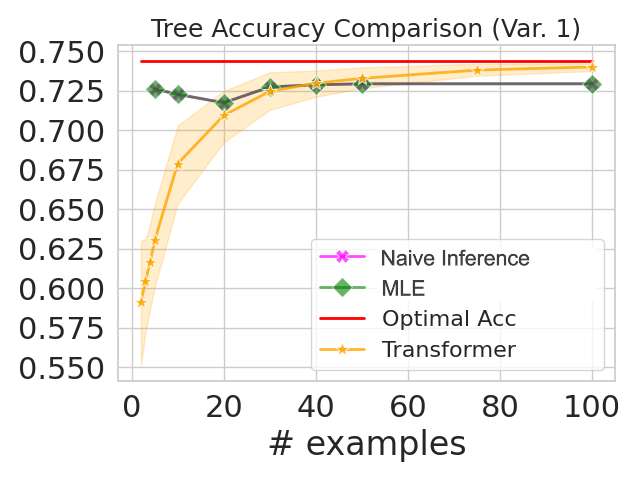}
        \hfill\includegraphics[width=0.31\linewidth]{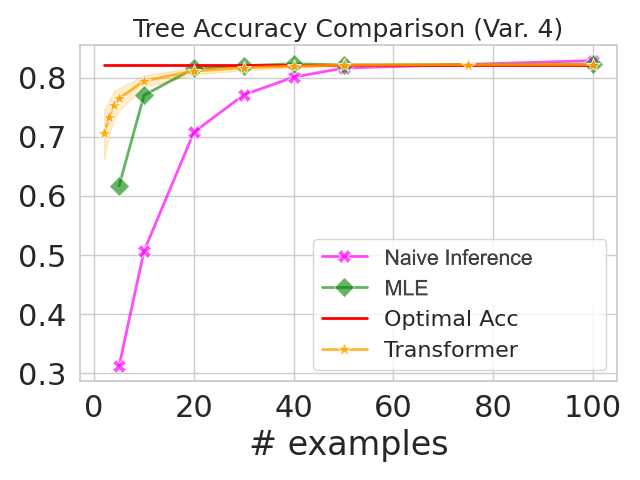}
        % \caption{Trees}
    \end{subfigure}
    \vspace{-0.5em}
    \begin{subfigure}
        \centering
        \includegraphics[width=0.31\linewidth]{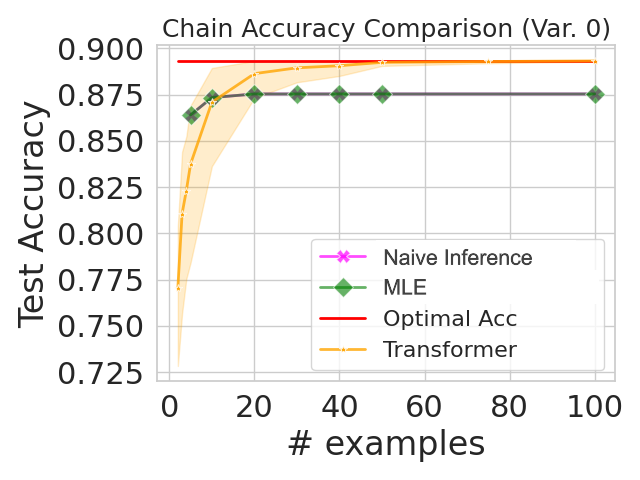}%
        \hfill\includegraphics[width=0.31\linewidth]{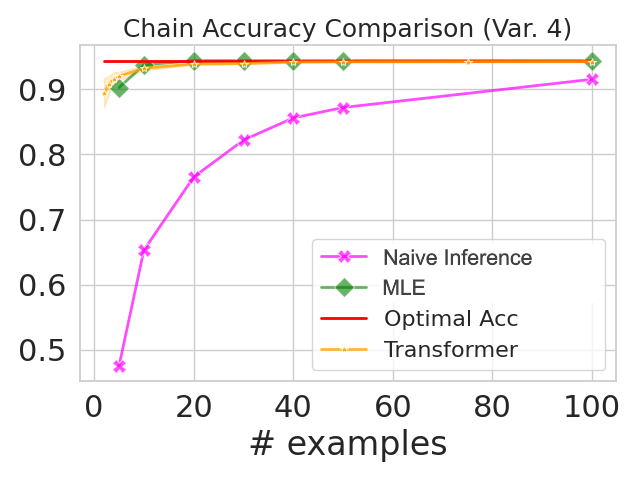}
        \hfill\includegraphics[width=0.31\linewidth]{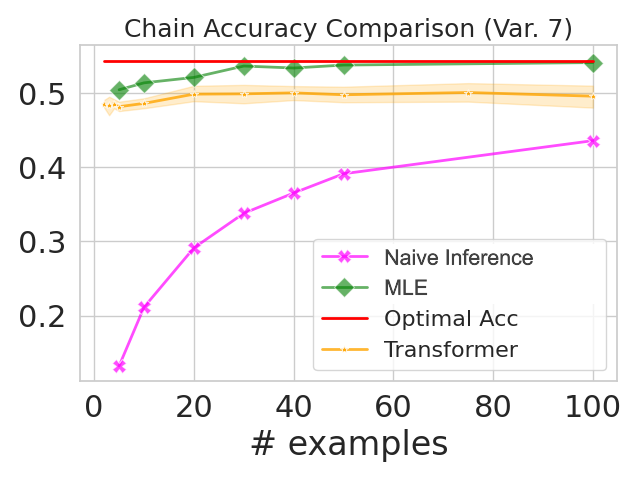}
    \end{subfigure}
    \vspace{-0.5em}
    \begin{subfigure}
        \centering
        \includegraphics[width=0.31\linewidth]{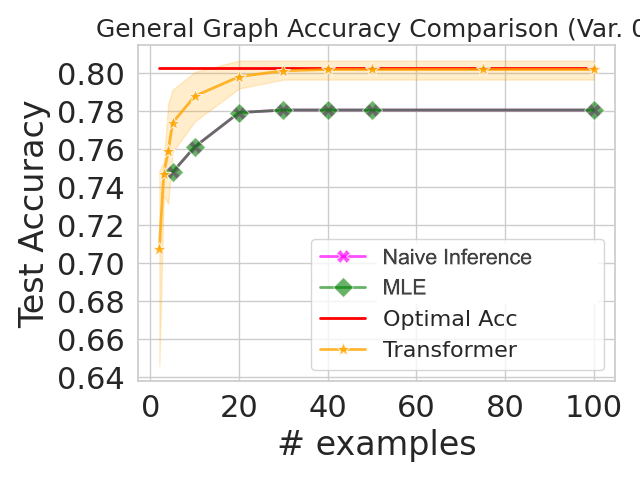}%
        \hfill\includegraphics[width=0.31\linewidth]{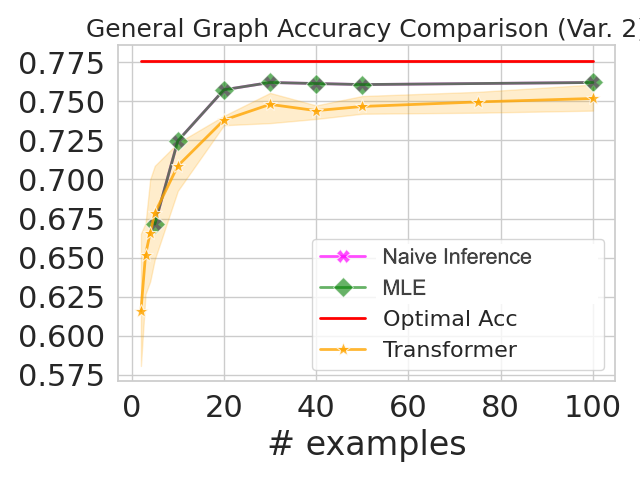}
        \hfill\includegraphics[width=0.31\linewidth]{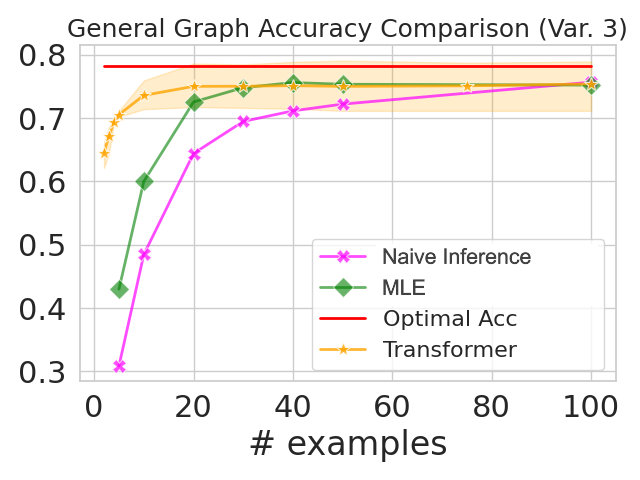}
    \end{subfigure}
    \vspace{-1em}
    \caption{
    \textbf{Top to Bottom: The Accuracy Comparison on Tree, Chain and General Graphs.}
    % We select 3 variables in each graph.
    We observe transformers present similar performance with MLE (short for MLE based on true network) and show better sample efficiency comparing to naive inference, indicating transformers are capable to model relationships between variables according to graph structure.
    % Moreover, MLE based on Bayesian network architecture and naive inference fail to generate prediction when the test token was never observed in the provided examples.
    % However, transformers are able to generate predictions based on its learned prior, showing its superior performance under few examples.
    }\label{fig:main-result}
\end{figure}

\textbf{Inference Results.}
The test accuracy results are in Figure~\ref{fig:main-result}.
Note that naive inference is able to model the first few variables in the selected graphs well as shown in the first column of Figure~\ref{fig:main-result}.
For general graph, variables 2, 3 both have 2 parents.
However, modeling variable 0, 1 is identical for naive inference and MLE based on true network, and is \textbf{NOT} for variable 3.
For tree, modeling root and variable 1 is identical for naive inference and MLE based on true network.
For chain, modeling variable 0, 1 is identical for naive inference and MLE based on Bayesian network architecture.
However, as the order of the variable goes further, transformers outperforms naive inference on both sample efficiency and test accuracy.
Moreover, MLE based on Bayesian Network Architecture and naive inference fail to generate prediction when the test token was never observed in the provided examples.
However, transformers are able to generate predictions based on its learned prior, showing its superior performance under few examples.
This indicates transformers are able to utilize the graph structure to generate prediction instead of treating all variables as independent observations.
Notably, while our transformers are only trained on samples with $N_{\text{train}}=100$, they are able to generalized to different values of $N_{\text{test}}$, and their test accuracy approaches to MLE based on Bayesian Network Architecture when $N_{\text{test}}$ increases.
This again verifies the capability of transformers to learn MLE based on Bayesian Network Architecture and model graph structure well.
Another thing to highlight is that both naive inference and MLE based on Bayesian Network Architecture are not capable of handling unseen observations, leading to assigning 0 probability on every outcome under this case.
However, transformers are able to utilize its learned prior from training data to perform prediction.
This explains why transformers outperforms the MLE based on Bayesian network baseline sometimes when $N_{\text{test}}$ is small.

\begin{figure}[th]
    \centering
    \includegraphics[width=\linewidth]{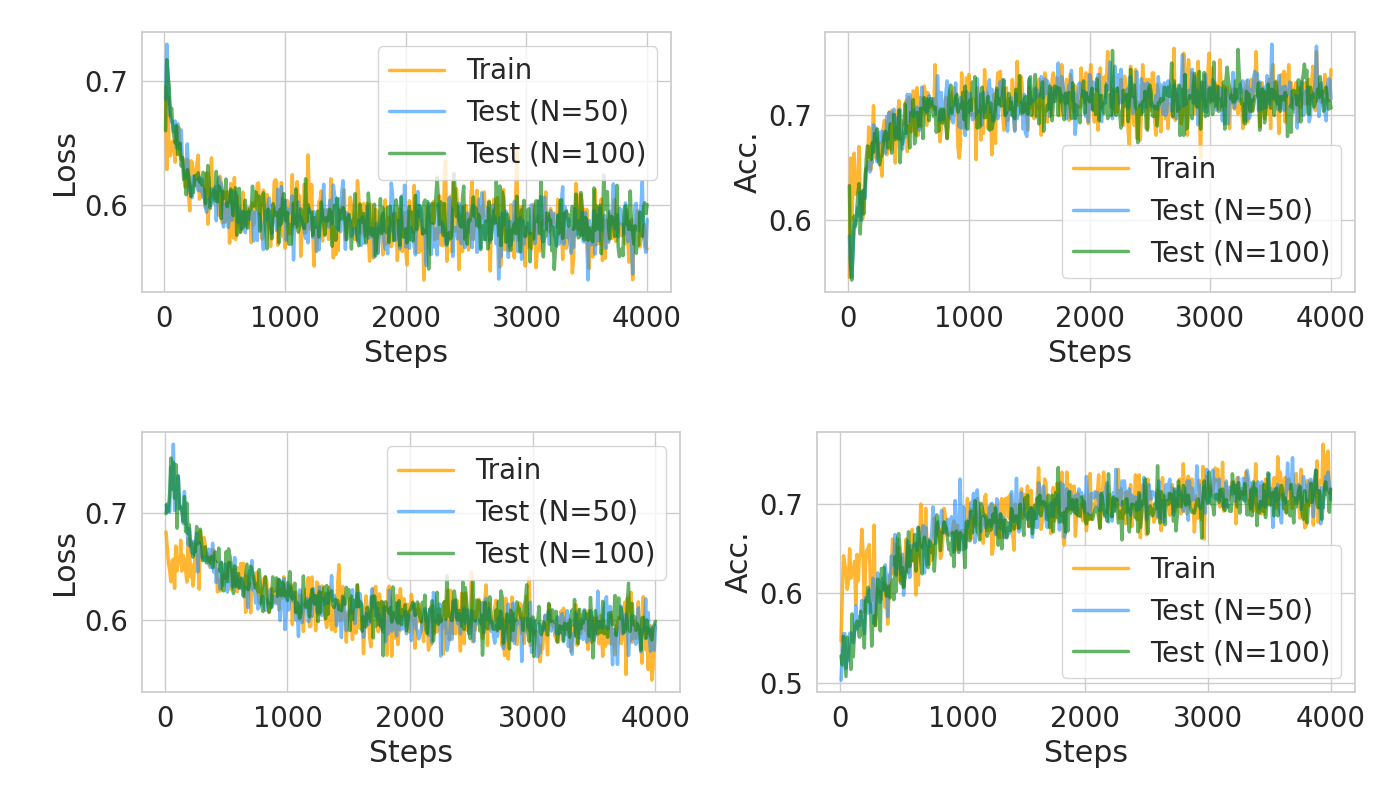}
    \caption{\textbf{Top: Convergence result on general graph.
    Bottom: Convergence result on tree.
    }
    We track the convergence result of transformers trained on general graph.
    Overall, we observe a decreasing trend of loss and increasing trend of accuracy on both training and test data.
    We also see that transformers are able to generalize well on the $N_{\text{test}}=50$ case even when its trained with $N_{\text{test}}=100$.
    % This shows transformers are capable of doing MLE based on Bayesian network architecture instead of learning inductive bias of data.
    % Note that during the beginning of training, transformers perform better on trainset than testset.
    % This is due to the curriculum design that the model is initially exposed to only 2 variables, while evaluated on the whole graph.
    }
    \label{fig:graphs-convergence}
\end{figure}

\textbf{Convergence Results.}
We now discuss the convergence result of transformers training on general graph and tree in Figure~\ref{fig:graphs-convergence}.
We show the loss and accuracy curve on training and test dataset throughout the optimization process.
We also observe the generalization performance on $N$ of transformers.
Specifically, we train models on $N=100$, and evaluate them on both $N=100$ and $N=50$ cases.
We observe that the loss curve presents a decreasing trend, and the accuracy is able to reach near optimal ($\sim 0.75$)\footnote{This is a rouge estimation based on our design of probability distributions of training data.} .
Note that all training and test samples are sampled from Bayesian networks.
Therefore, the optimal loss and accuracy are not $0$ and $1$, respectively.
Further, the generalization performance matches the results in Figure~\ref{fig:main-result}, as we see transformers are capable of performing MLE based on Bayesian network architecture under different $N_{\text{test}}$.

\subsection{Generalization Analysis}
Here we analyze when transformers trained on a fixed number of examples, which we denote $N_{\text{train}}$, whether it can generalize to different number of $N_{\text{test}}$.
We evaluate 2 cases:
(1) $N_{\text{train}} \gg N_{\text{test}}$,
(2) $N_{\text{train}} \ll N_{\text{test}}$.
Note that in our construction, $N$ does not affect transformers ability to perform MLE based on Bayesian network architecture.
However, during training, small $N_{\text{train}}$ can produce large noise, whereas larger $N_{\text{train}}$, while being more stable, can be easily modeled by naive inference.
This raises a doubt that whether transformers trained under larger $N_{\text{train}}$ learn naive inference or MLE based on Bayesian network architecture.
Therefore, we train transformers with $N_{\text{train}} \in \{5, 10, 200, 400\}$, and evaluate them with different $N_{\text{test}}$.
We also report the loss and accuracy curve during training and use $N_{\text{test}} \in \{20, 50\}$ as testset. 
The choice of these numbers is based on the fact that these numbers are effective to show the gap between MLE based on Bayesian network architecture and naive inference.
We present the results on general graph in the main paper, the generalization analysis on tree can be found in Appendix~\ref{generalization-tree}.

\paragraph{Results.}
The convergence and inference results are in Figure~\ref{fig:graphs-inference-gen}, Figure~\ref{fig:graphs-convergence-gen-left} and Figure~\ref{fig:graphs-convergence-gen-right}, respectively.
For the convergence result, we observe that models trained on large $N_{\text{train}}$ is able to generalize well on both $N_{\text{test}} = 20, 50$ (accuracy above 0.7).
However, for models trained under small $N_{\text{train}}$, they do not converge well and also do not generalize well on testset (accuracy below 0.7).
For the inference result, we see that models trained on large $N_{\text{train}}$ is capable of performing MLE based on Bayesian network architecture.
But models trained under small $N_{\text{train}}$ struggle to utilize the network structure to predict.
A potential reason is smaller $N_{\text{train}}$ is not sufficient to approximate the ground truth probability distribution well.
Also, while models trained on $N_{\text{train}}=400$ is almost equivalent to learning on independent variables, modeling are still able to learn the network structure, potentially show the positive effect of curriculum.
The result indicates a sufficient large $N_{\text{train}}$ is critical for transformers to learn MLE based on Bayesian network architecture in-context, providing practical insights on tasks in real-world scenarios.

\begin{figure}[t]
    \centering
    \includegraphics[width=\linewidth]{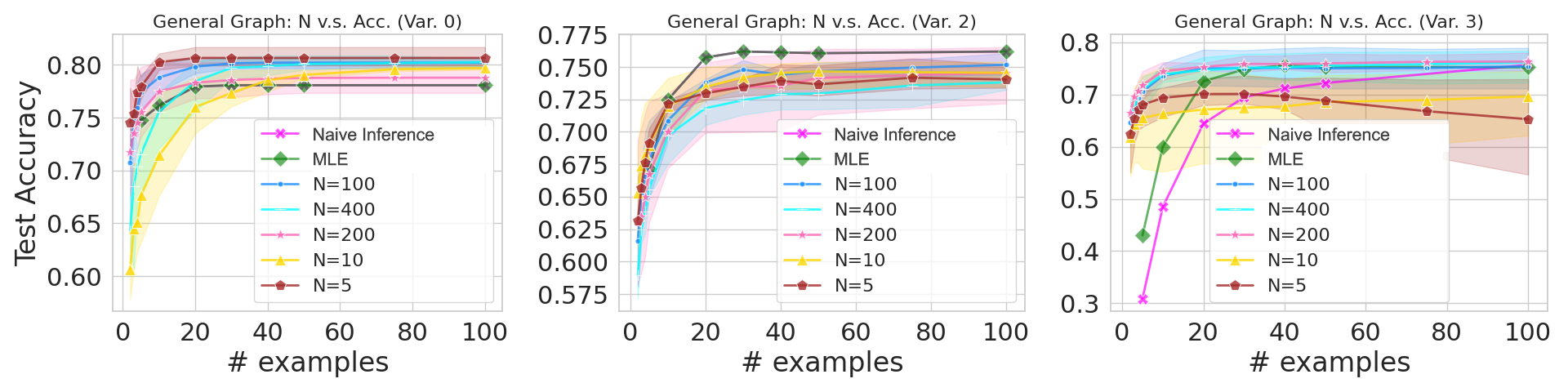}
    \vspace{-2em}
    \caption{\textbf{Left to right:
    Transformer's performance on general graph variable 0, 2, 3.
    }
    For variable 0, 2, all models are able to model the variable distributions well.
    Interestingly, for variable 3, transformers trained under $N_{\text{train}} = [5, 10]$ are not capable of predicting it well.
    Moreover, its performance is even worse than naive inference for large $N_{\text{test}}$.
    The result indicates that a sufficient size of $N_{\text{train}}$ is necessary for transformers to learn the network structure.
    }
    \label{fig:graphs-inference-gen}
\end{figure}

\begin{figure}[t]
    \centering
    \includegraphics[width=\linewidth]{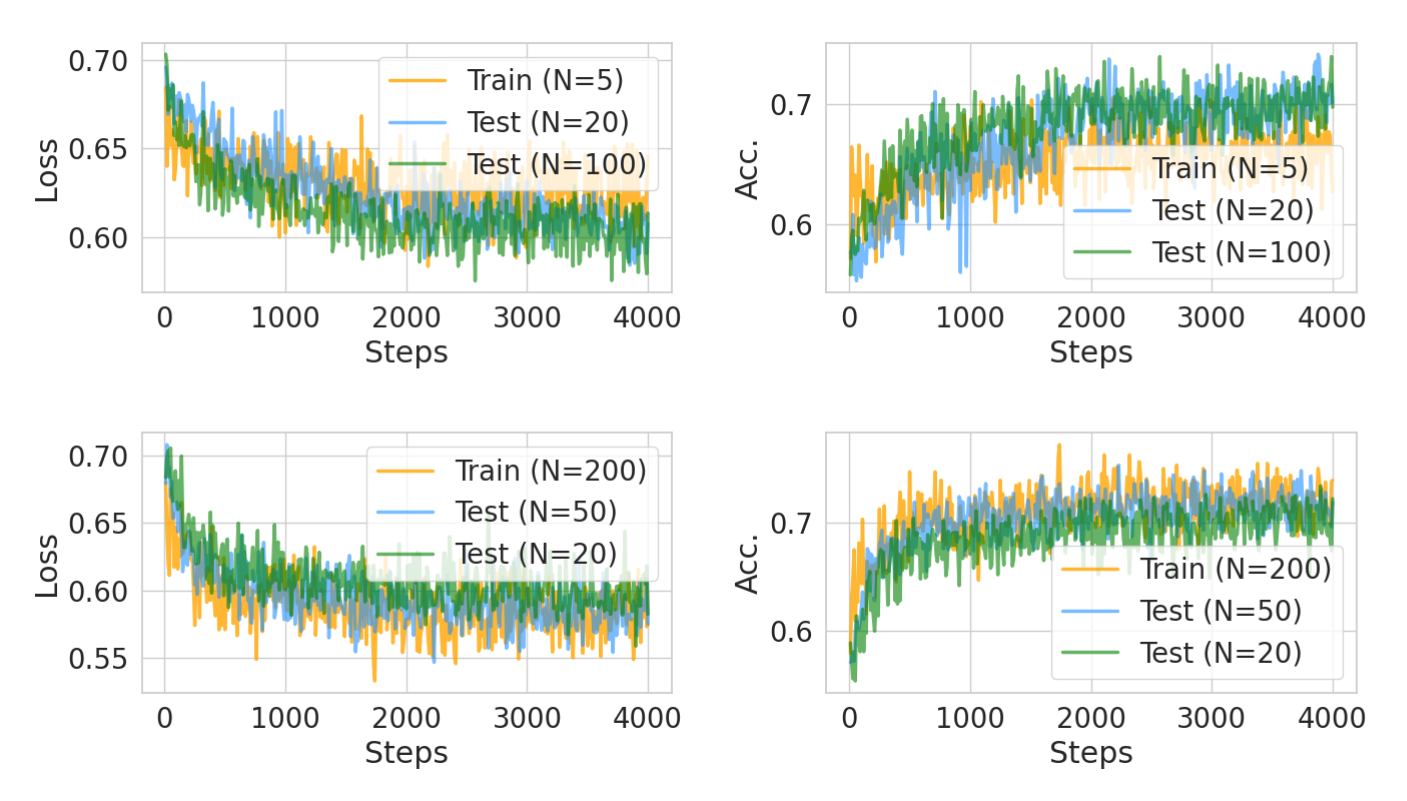}
    \vspace{-2em}
    \caption{\textbf{Convergence result on general graph for $N_{\text{train}} \in \{ 5, 10, 200, 400\}$.
    }
    Here we observe an obvious contrast between models trained on large and small $N_{\text{train}}$.
    For smaller $N_{\text{train}}$, model performance on training dataset is lower than testset.
    For larger $N_{\text{train}}$, we observe the opposite.
    We believe this is due to the fact that smaller $N_{\text{train}}$ does not provide sufficient sample size to recover the probability distribution well.
    }
    \label{fig:graphs-convergence-gen-left}
\end{figure}

\begin{figure}[t]
    \centering
    \includegraphics[width=\linewidth]{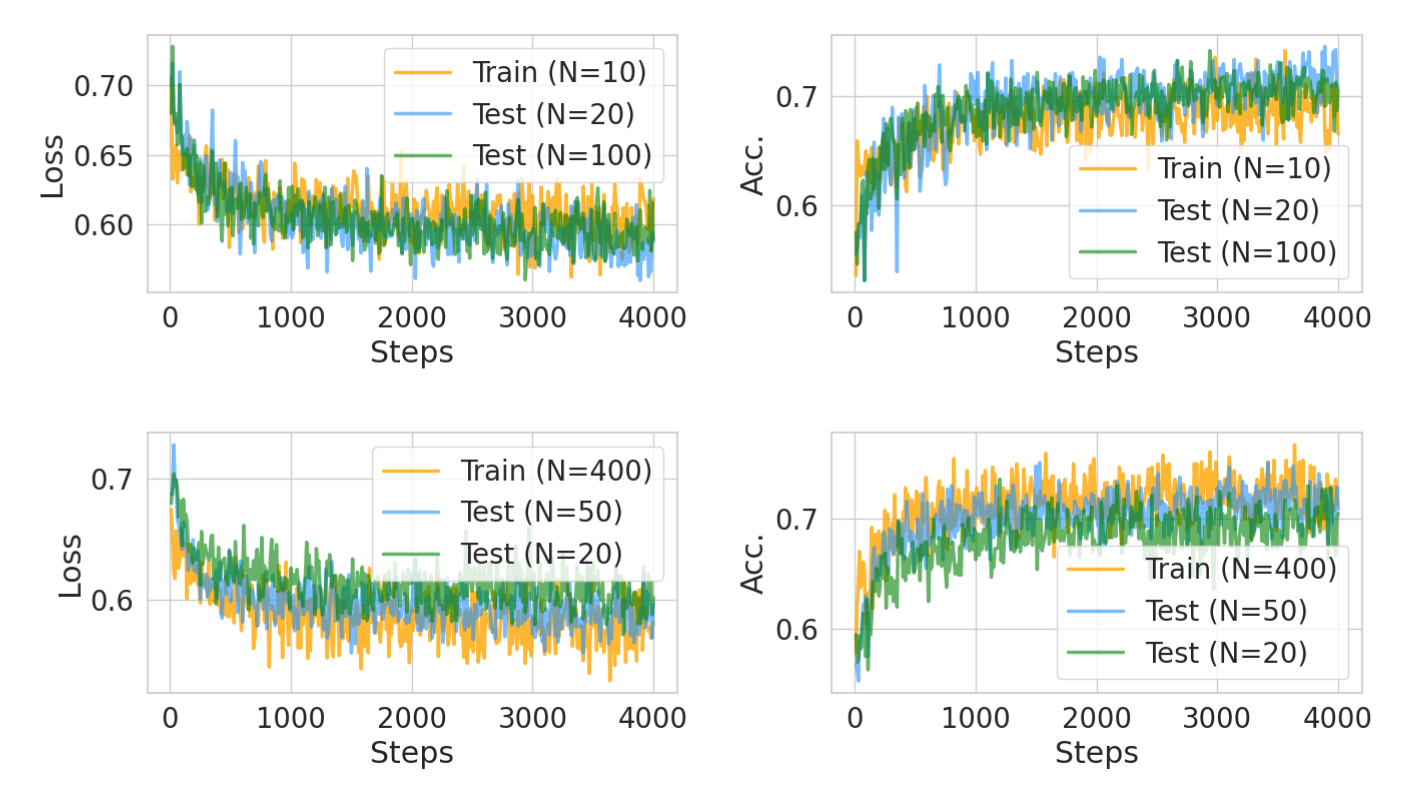}
    \vspace{-2em}
    \caption{\textbf{
     Convergence result on tree for $N_{\text{train}} \in \{ 5, 10, 200, 400\}$.
    }
    Similar to the convergence results on general graph, we also observe contrast between models trained on large and small $N_{\text{train}}$.
    Larger $N_{\text{train}}$ leads to better generalization while smaller $N_{\text{train}}$ leads to performance degradation
    }
    \label{fig:graphs-convergence-gen-right}
\end{figure}

\subsection{Real World Dataset}
Here we conduct experiments on the American Community Survey Income (ACSIncome) dataset from US Census.
The task is the predict whether the individual has an annual income over 50K U.S. dollars.

\paragraph{ACSIncome.}
The task is a binary classification problem with categorical features.
The ACSIncome dataset encompasses five years of data from approximately 3.5 million U.S. households, including information on citizenship, education, employment, marital status, and other attributes. 
The objective of this study is to predict whether an individual's annual income exceeds \$50,000. 
We utilize the version curated by \cite{ding2021retiring}, which excludes individuals younger than 16 years of age and those who worked fewer than 1 hour per week in the previous year. 
The income threshold of \$50,000 is consistent with that used in the UCI Adult Dataset \cite{adult_2}.

\paragraph{Distribution Shift in the Dataset.}
According to the analysis in \cite{liu2024need}, both the ACSIncome and ACSPublicCoverage datasets exhibit significant distributional shifts across different U.S. states and years of data collection, indicating strong heterogeneity in the conditional distributions across states and years. According to our analysis, transformers have the strength to capture dependency relationships among variables (i.e., the graph structure of the Bayesian network) by utilizing all available data, while estimating conditional probabilities based on the context. Therefore, we expect that transformers may offer potential benefits for these tasks.

% a potential benefit of using transformers for these tasks, as 

% Bayesian Inference for these tasks, where parameter heterogeneity occurs due to the difference of states and years.

\paragraph{Setup.}
We partitioned the data by state, designating one state (CA) as the test set and the rest as the training set.
We consider each state and year combination is a context, i.e., $(\text{CA}, 2014)$ and $(\text{CA}, 2015)$ and $(\text{MA}, 2014)$ are three different contexts, meaning they share the same Bayesian network structure, but has different parameters.
Our training data contains $245$ contexts ($49 \times 5$), which contains $5$ years of data of $49$ states in the US.
Our testset contains $5$ years of data of the state of California (CA), and we evaluate model's performance on these $5$ years separately since we assume they are $5$ different contexts. 
There are 10 variables in the network/feature, with different dimensions.
To simply the scope of the experiments, we merge some categories (within a variable) together as described in Appendix~\ref{experiment-details:real-data}.
For more training details, please also refer to Appendix~\ref{appendix:exp-details}.

\paragraph{Baselines.}
We compare Transformers to a 2-layered FeedForward ReLU Network (FFN).
Similar to our settings for the synthetic dataset, we vary $N_{\text{test}}$ for transformers.
For FFN, we use $N_{\text{test}}$ as the size of their training data, and train FFN with it.
Note that in our synthetic settings, baselines like naive inference and MLE are also only exposed to the ICL examples.
Therefore, we conduct our experiments on FFN with the same approach.
The 2-layer FFN has hidden dimension of $(50, 100)$.
We repeat both baselines for 20 runs and plot the average and standard deviation of their test accuracy.
We use $N_{\text{test}} = [5, 100, 200, 
 300, 400, 500, 600]$.

\paragraph{Results.}
The results are in Figure~\ref{fig:real-world-exp}.
We observe that when learning to perform MLE on the Bayesian network, transformers are able to improve its performance with larger ICL example sizes.
Note that the weights of transformer remain unchanged, indicating that the provided ICL examples provide useful information about the context distribution.
The results indicate that our theoretical insights also provide practical guidance to real world applications.

\begin{figure}[h]
    \centering
    \begin{subfigure}
        \centering
        \includegraphics[width=\linewidth]{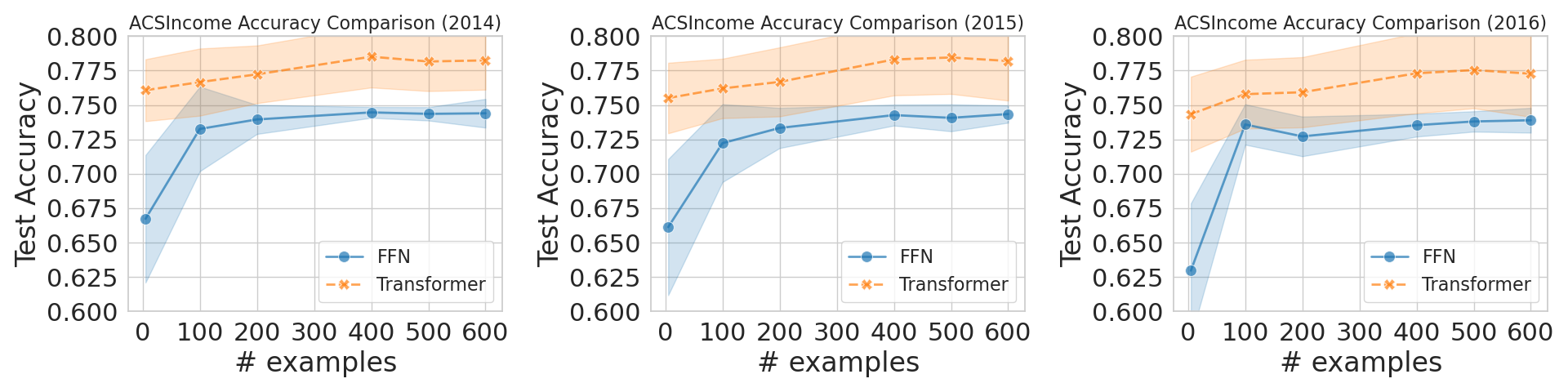}%
        % \caption{Trees}
    \end{subfigure}
    \vspace{-0.5em}
    \begin{subfigure}
        \centering
        \includegraphics[width=0.65\linewidth]{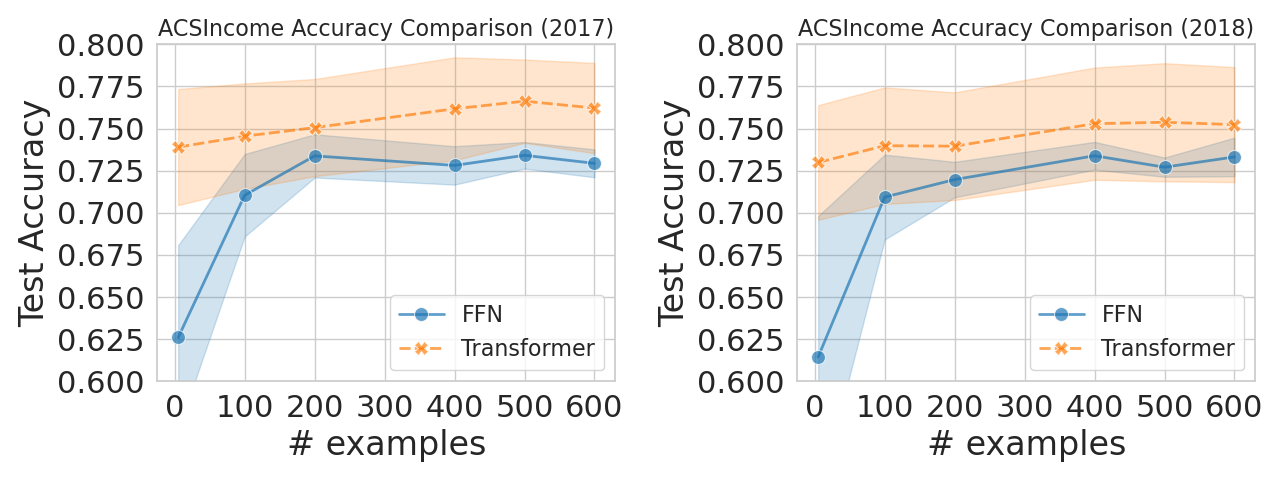}%
        % \hfill\includegraphics[width=0.33\linewidth,
        % trim={0 0 0 0}, clip]{2018.png}
    \end{subfigure}
    \vspace{-0.5em}
    \caption{
    \textbf{Top: The Accuracy comparison for Year 2014, 2015, and 2016.
    Bottom:
    Accuracy comparison for Year 2017 and 2018.
    }
    We are able to see that with the number of examples increases, transformer is able to perform better while weights being unchanged.
    This implies that more ICL examples provides useful information for the transformer to predict.
    }\label{fig:real-world-exp}
\end{figure}

\section{Proof sketch}
In this section, we give a proof sketch of Theorem~\ref{thm:main}. Te proof is based on relatively intuitive constructions of the two transformer layers. The result for the first transformer layer is summarized into the following lemma.
\begin{lemma}\label{lemma:first_layer}
    For any Bayesian network $\cB$ with maximum in-degree $D$, there exists a one-layer transformer $\mathrm{TF}_{\btheta^{(1)}}( \cdot )$ with parameter matrices satisfying 
    $\|\Vb^{(1)} \|_2, \|\Kb^{(1)} \|_2, \|\Qb^{(1)} \|_2, \|\Wb_2^{(1)} \|_2 \leq 1$ and $ \|\Wb_1^{(1)} \|_2\leq 2\sqrt{D+1}$,
    such that for any the variable-of-interest index $m_0$, it holds that % and the corresponding $\pb$ and $\pb_q$.
    \begin{align*}
        \mathrm{TF}_{\btheta^{(1)}}( \Xb ) = \tilde{\Xb} := \begin{bmatrix}
        \tilde\xb_{11} & \tilde\xb_{12} & \cdots & \tilde\xb_{1N} & \tilde\xb_{1q}  \\
        \tilde\xb_{21} & \tilde\xb_{22} & \cdots & \tilde\xb_{2N} & \tilde\xb_{2q}\\
        \vdots & \vdots & & \vdots & \vdots\\
        \tilde\xb_{M1} & \tilde\xb_{M2} & \cdots & \tilde\xb_{MN}  & \tilde\xb_{Mq}\\
        \pb & \pb  & \cdots & \pb &\mathbf{p}_{q}
    \end{bmatrix},
    \end{align*}
    where 
    \begin{align*}
        \tilde\xb_{mi} = \left\{ \begin{aligned}
&\xb_{mi}, &&\text{if } m \in \{ m_0 \}\cup \cP(m_0);\\
&\mathbf{0}, &&\text{otherwise}.
\end{aligned}
\right.,~~\tilde\xb_{mq} = \left\{ \begin{aligned}
&\xb_{mq}, &&\text{if } m \in \{ m_0 \}\cup \cP(m_0);\\
&\mathbf{0}, &&\text{otherwise}.
\end{aligned}
\right.
\end{align*}
for all $i\in[N]$.
%     \begin{align*}
%         \tilde\xb_{mq} = \left\{ \begin{aligned}
% &\xb_{mq}, &&\text{if } m \in \{ m_0 \}\cup \cP(m_0);\\
% &\mathbf{0}, &&\text{otherwise}.
% \end{aligned}
% \right.
% \end{align*}
    % $\tilde\xb_{mq} = \xb_{mq}$ if $m= m_0$ or $X_m$ is a parent of $X_{m_0}$, and $\tilde\xb_{mq} = \xb_{mq}$ otherwise.
\end{lemma}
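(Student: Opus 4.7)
The plan is to construct the first transformer layer so that the attention sublayer acts as the identity and the feed-forward sublayer performs the row-selective masking. Concretely, I would take $\Vb^{(1)} = \mathbf{0}$ and $\Kb^{(1)}, \Qb^{(1)}$ to be any matrices of operator norm at most $1$ (for instance, $\mathbf{0}$ itself). Because of the skip connection inside $\mathrm{Attn}_{\Vb^{(1)},\Kb^{(1)},\Qb^{(1)}}$, the attention sublayer is then literally the identity map, so both the $\xb$-entries and the positional-encoding rows of $\Xb$ reach the feed-forward sublayer unmodified.

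The feed-forward sublayer is then designed so that $\Wb_2^{(1)}\sigma(\Wb_1^{(1)}\Xb) = \tilde\Xb - \Xb$; that is, it subtracts $\xb_{mi}$ at row $(m-1)d+j$ precisely when $m \notin \{m_0\}\cup\cP(m_0)$, and adds nothing to the positional-encoding rows. The central trick is the ReLU identity $\sigma(u-v) = u(1-v)$ for $u,v \in \{0,1\}$. For each $(m,j) \in [M]\times[d]$ I introduce a hidden unit $h_{m,j} = \sigma([\xb_{mi}]_j - a_m(m_0))$, where $a_m(m_0) = \ind[m \in \{m_0\}\cup\cP(m_0)]$ is the keep-indicator. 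Since $\pb$ is a block one-hot encoding of $m_0$, I can realize $a_m$ as a linear functional of positional-encoding entries, namely $a_m = \sum_{m_0' \in \{m\}\cup\mathrm{ch}_\cB(m)} [\pb]_{(m_0'-1)d+1}$, where $\mathrm{ch}_\cB(m)$ denotes the children of $m$ in $\cB$. The hidden unit then evaluates to $[\xb_{mi}]_j(1-a_m)$, so placing $-h_{m,j}$ at output position $(m-1)d+j$ via $\Wb_2^{(1)}$ (with zero rows throughout the positional-encoding block) yields $\tilde\xb_{mi}[j]$ after the skip and leaves $\pb$ (or $\pb_q$) intact.

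The main technical obstacle is establishing $\|\Wb_1^{(1)}\|_2 \le 2\sqrt{D+1}$. A direct reading of the above construction places roughly $1 + |\mathrm{ch}_\cB(m)|$ nonzero entries in row $(m,j)$, so even if every entry has magnitude $1$, individual row $\ell_2$-norms could exceed $\sqrt{D+1}$ whenever $\cB$ has large out-degrees. The saving grace is column sparsity: each positional-encoding column $Md + (m_0'-1)d+1$ is nonzero only for those hidden units with $m \in \{m_0'\}\cup\cP(m_0')$, giving column $\ell_0 \le (D+1)d$. I would split $\Wb_1^{(1)} = A + B$ into the $\xb$-selection block (with $\|A\|_2 = 1$) and the $\pb$-reading block $B$, factor $B$ through the row-replication map from $\RR^M$ to $\RR^{Md}$ (which has spectral norm $\sqrt{d}$), and bound the resulting $M\times M$ adjacency-closure matrix via $\|\cdot\|_2 \le \sqrt{\|\cdot\|_1\,\|\cdot\|_\infty}$. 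Making the row-sparsity analysis tight, so that the dependence on out-degrees does not leak into the final bound and one genuinely recovers $2\sqrt{D+1}$, is the delicate step I anticipate will require the most care; the remaining spectral-norm bounds $\|\Vb^{(1)}\|_2, \|\Kb^{(1)}\|_2, \|\Qb^{(1)}\|_2, \|\Wb_2^{(1)}\|_2 \le 1$ all hold essentially by construction.
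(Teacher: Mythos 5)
Your construction is essentially the paper's: you make the attention sublayer the identity by taking $\Vb^{(1)}=\mathbf{0}$, and you use the ReLU feed-forward sublayer to subtract exactly the rows outside $\{m_0\}\cup\cP(m_0)$, reading the keep-indicator $a_m=\ind[m_0\in\{m\}\cup\mathrm{ch}(m)]=\ind[m\in\{m_0\}\cup\cP(m_0)]$ off the positional encoding through a graph-structured linear block. The paper does the same thing with $\sigma(\xb_{mi}-2a_m\mathbf{1}_d)$ where you use $\sigma([\xb_{mi}]_j-a_m)$; both identities are valid on $\{0,1\}$-valued inputs (including the all-zero $\xb_{mq}$ for not-yet-sampled variables), so your argument establishes the functional claim $\mathrm{TF}_{\btheta^{(1)}}(\Xb)=\tilde\Xb$ exactly as the paper's does, and $\|\Vb^{(1)}\|_2,\|\Kb^{(1)}\|_2,\|\Qb^{(1)}\|_2,\|\Wb_2^{(1)}\|_2\le 1$ are immediate in both. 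The one step you flag as delicate --- the bound $\|\Wb_1^{(1)}\|_2\le 2\sqrt{D+1}$ --- is precisely the step the paper asserts without verification, and your worry is well-founded: the hidden unit for coordinate $(m,j)$ must carry a unit coefficient on the positional entry of every $m_0'\in\{m\}\cup\mathrm{ch}(m)$, so that row of the positional-reading block has $\ell_2$ norm $\sqrt{1+|\mathrm{ch}(m)|}$, and hence $\|\Wb_1^{(1)}\|_2\ge\sqrt{1+\max_m|\mathrm{ch}(m)|}$; for a star-shaped network ($D=1$, one node with $M-1$ children) this is $\sqrt{M}\gg 2\sqrt{D+1}$. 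Your proposed remedy $\|\cdot\|_2\le\sqrt{\|\cdot\|_1\,\|\cdot\|_\infty}$ gives $\sqrt{(D+1)(1+\max_m|\mathrm{ch}(m)|)}$ and therefore cannot remove the out-degree dependence, which the row-norm lower bound shows is intrinsic to this construction. So you should state the norm bound with $D$ replaced by the maximum of the in- and out-degrees (or restrict to bounded out-degree); with that amendment your proof is complete and coincides with the paper's, whose stated bound has the same unaddressed issue.
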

Lemma~\ref{lemma:first_layer} above shows that, there exists a transformer layer with bounded weight matrices that can serves as a ``parents selector'' -- for any $m_0\in [M]$, as long as the ``positional embeddings'' $\pb$ and $\pb_q$ are defined accordingly, the output of the transformer layer will retain only the values of the observed variables that are direct parents of the $m_0$-th variable. This operation, which trims all non-essential observation values, effectively prepares for the in-context estimation of the conditional probabilities in the second layer.

The following lemma gives the result for the second transformer layer, which takes the output $\tilde\Xb$ of the first layer given in Lemma~\ref{lemma:first_layer} as input. 
\begin{lemma}\label{lemma:second_layer}
For any $\epsilon > 0$ and any Bayesian network $\cB$ with maximum in-degree $D$, there exists a one-layer transformer $\mathrm{TF}_{\btheta^{(2)}}( \cdot )$ with parameter matrices satisfying 
\begin{align*}
    \|\Vb^{(2)} \|_2, \|\Wb_1^{(2)} \|_2, \|\Wb_2^{(2)} \|_2  \leq 1, ~~\|\Kb^{(2)} \|_2, \|\Qb^{(2)} \|_2\leq 3\log(MdN/\epsilon),
\end{align*}
    % $\|\Vb^{(2)} \|_2, \|\Wb_1^{(2)} \|_2, \|\Wb_2^{(2)} \|_2  \leq 1$ and $ \|\Kb^{(2)} \|_2, \|\Qb^{(2)} \|_2\leq 3\log(MdN/\epsilon)$,
    such that for any index of the variable-of-interest $m_0$ and the corresponding $\tilde\Xb$ defined in Lemma~\ref{lemma:first_layer}, it holds that
\begin{align*}
    \mathrm{Read}\big[ \mathrm{TF}_{\btheta^{(2)}}( \tilde\Xb ) \big] = \hat\xb_q +  \sbb,
\end{align*}
where $\hat\xb_q = [\mathbf{0}_{(m_0-1)d}^\top,-\hat\xb_{m_0 q}^\top, \mathbf{0}_{(2M-m_0)d}^\top, \mathbf{1}_{d} ^\top]^\top$ with $\hat\xb_{m_0 q} = \pb^{\mathrm{MLE}}_{m_0}$, 
% \begin{align*}
%     \hat\xb_{m_0 q} = \pb^{\mathrm{MLE}}_{m_0} % \sum_{i\in[N]} \xb_{m_0i} \frac{ \ind[  \xb_{mi} = \xb_{mq} \text{ for all }m\in \cP(m_0) ] }{| \{ i\in [N]: \xb_{mi} = \xb_{mq} \text{ for all }m\in \cP(m_0) \}|},
% \end{align*}
and $\sbb\in \RR^{(2M+1)d}$ satisfies that $ \| \sbb \|_\infty \leq \epsilon / [(2M+1)d] $, and $\sum_{i=(m_0-1) d + 1)}^{m_0d} \sbb_i = 0$.
% \begin{align*}
%     \mathbf{TF_{\btheta^{(2)}}}(\tilde\Xb) = \begin{bmatrix}
%         \hat\xb_{11} & \hat\xb_{12} & \cdots & \hat\xb_{1N} & \hat\xb_{1q}  \\
%         \hat\xb_{21} & \hat\xb_{22} & \cdots & \hat\xb_{2N} & \hat\xb_{2q}\\
%         \vdots & \vdots & & \vdots & \vdots\\
%         \hat\xb_{M1} & \hat\xb_{M2} & \cdots & \hat\xb_{MN}  & \hat\xb_{Mq}\\
%         \hat\pb & \hat\pb  & \cdots & \hat\pb & \hat\pb_{q}
%     \end{bmatrix},
% \end{align*}
% where 
% \begin{align*}
%     \Bigg \| \hat\xb_{m_0q} - \sum_{i=1}^N \xb_{m_0i} \cdot \frac{ \ind[ \xb_{mi} = \xb_{mq} \text{ for all }m\in \cP(m_0)] }{ | \{ i\in [N]: \xb_{mi} = \xb_{mq} \text{ for all }m\in \cP(m_0) \} | } \Bigg\|_2\leq \epsilon.
% \end{align*}
% Moreover, 
% $\| \hat\xb_{mi} \|_2\leq \epsilon$ for all $m\in [M]$ and $i\in[N]$, $\| \hat\xb_{mq} \|_2\leq \epsilon$ for all $m \neq m_0$. 
\end{lemma}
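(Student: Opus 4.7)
The plan is to exhibit a concrete construction of the second transformer layer that turns the parent-filtered tokens $\tilde\Xb$ produced by Lemma~\ref{lemma:first_layer} into the MLE estimate via a single attention head, and then applies a near-identity feed-forward layer to finish. The key observation is that in $\tilde\Xb$ only the one-hot vectors at variables $m\in\{m_0\}\cup\cP(m_0)$ survive, so the inner product between the variable parts of two columns of $\tilde\Xb$ equals exactly the number of parents on which those columns agree. This is precisely the indicator pattern attention needs in order to concentrate on the contexts that match the query on every parent.

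Concretely, I would take $\Qb^{(2)}$ to extract the variable rows scaled by a temperature $\tau=\sqrt{3\log(MdN/\epsilon)}$ with zeros on the positional rows, and take $\Kb^{(2)}$ to do the same plus a rank-one additive term that reads the ``query indicator'' (the last $d$ rows of $\pb_q$, equal to $\mathbf{1}_d$ only for the query column) and contributes a large negative offset along the direction in which $\Qb^{(2)}\tilde\Xb_{\cdot,N+1}$ lies. Then the attention logit from the query to a context column $i\in[N]$ equals $\tau^2\cdot|\{m\in\cP(m_0):\xb_{mi}=\xb_{mq}\}|$, attaining its maximum $\tau^2|\cP(m_0)|$ precisely for matching contexts and at most $\tau^2(|\cP(m_0)|-1)$ otherwise, while the self logit is suppressed to $\tau^2|\cP(m_0)|-\Delta$ for a $\Delta$ I make $\gtrsim\log((N+1)/\epsilon)$. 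A standard softmax-perturbation argument then gives attention weights within $\epsilon/[(2M+1)d]$ entrywise of uniform-over-matches.

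For the value head I would take $\Vb^{(2)}=-I$, so the column-$(N{+}1)$ attention output equals $-\sum_i w_i\tilde\Xb_{\cdot,i}$. Adding the residual $\tilde\Xb_{\cdot,N+1}$ yields, up to the softmax error: $\xb_{mq}-\xb_{mq}=\mathbf{0}$ at parent rows $m\in\cP(m_0)$ (matching contexts agree on parents); $\mathbf{0}-\pb_{m_0}^{\mathrm{MLE}}$ at the $m_0$-th variable block (since $\tilde\xb_{m_0 q}=\mathbf{0}$ and the uniform average of $\xb_{m_0 i}$ over matches is exactly $\pb_{m_0}^{\mathrm{MLE}}$); $\mathbf{0}$ at all other variable rows; and $\pb_q-\pb=[\mathbf{0}_{Md}^\top,\mathbf{1}_d^\top]^\top$ at the positional rows. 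This matches $\hat\xb_q$ exactly, so taking the feed-forward layer to be essentially the identity ($\Wb_1^{(2)}=\mathbf{0}$) preserves the output, and the stated spectral norms $\|\Vb^{(2)}\|_2=1$, $\|\Kb^{(2)}\|_2,\|\Qb^{(2)}\|_2\le 3\log(MdN/\epsilon)$, $\|\Wb_1^{(2)}\|_2,\|\Wb_2^{(2)}\|_2\le 1$ all hold.

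The main obstacle is twofold. First, I must keep $\|\Kb^{(2)}\|_2$ bounded while driving the self logit far below the matching logits through the query-indicator augmentation; I handle this by writing the augmentation as a rank-one additive term whose spectral norm factors as a product of two scalars (the query-indicator projection magnitude and the suppression strength) that I can tune separately, so the total norm stays inside $3\log(MdN/\epsilon)$. Second, I must upgrade $\sum_{j=(m_0-1)d+1}^{m_0 d}\sbb_j\approx 0$ to exact equality: the raw attention-plus-residual at the $m_0$-th block sums to $-(1-w_{N+1})$ while the target sums to $-1$, leaving a residue of $w_{N+1}$. I absorb this residue through the feed-forward layer by letting $\Wb_1^{(2)}$ and $\Wb_2^{(2)}$ implement a small mean-subtraction on the $m_0$-th variable block, using $\pb$'s $\mathbf{1}_d$ indicator at the $m_0$-th positional block to gate the operation with a ReLU-based ``is this the $m_0$-th block'' detector; this correction costs only $O(\epsilon/[(2M+1)d])$ additional $\ell_\infty$ error and preserves the unit spectral norms on $\Wb_1^{(2)}, \Wb_2^{(2)}$. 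The remaining softmax-perturbation bound for $\|\sbb\|_\infty$ is standard and follows from the logit gap $\gtrsim\log(MdN/\epsilon)$.
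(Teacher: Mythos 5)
Your construction is essentially the paper's: the same $\Vb^{(2)}=-\Ib$ value head with residual, the same key/query matrices that read the variable rows at temperature $\sqrt{c}$ and use the last $d$ positional coordinates (where $\pb_q$ and $\pb$ differ) to push the query's self-logit down by $cd$, and the same softmax-concentration argument giving weights $\epsilon/[(2M+1)d]$-close to uniform over the matching contexts $\cI(m_0)$; the paper's only cosmetic difference is that it realizes your ``rank-one suppression term'' as a $d\times d$ identity block. The one place you diverge is instructive: you correctly observe that the residue $w_{N+1}>0$ from the query attending to itself makes $\sum_{i=(m_0-1)d+1}^{m_0 d}\sbb_i$ only exponentially small rather than exactly zero, and you propose a nontrivial feed-forward correction; the paper instead sets $\Wb_1^{(2)}=\Wb_2^{(2)}=\mathbf{0}$ and asserts exact equality, which is a (negligible but real) imprecision that your extra step would repair, though the details of implementing the gated mean-subtraction within unit spectral norm would still need to be written out.
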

Lemma~\ref{lemma:second_layer} shows that, there exists a transformer layer which takes the output of $\tilde\Xb$ defined in Lemma~\ref{lemma:first_layer} as input, and outputs a matrix whose last column is directly related to the target optimal maximum likelihood estimation $\pb_{m_0}^{\mathrm{MLE}}$. 

Given the two lemmas above, the proof of Theorem~\ref{thm:main} is straightforward. The proof is as follows.

\begin{proof}[Proof of Theorem~\ref{thm:main}]
Let $\mathrm{TF}_{\btheta^{(1)}}( \cdot )$ and $\mathrm{TF}_{\btheta^{(2)}}( \cdot )$ be defined in Lemmas~\ref{lemma:first_layer} and \ref{lemma:second_layer} respectively. Then we directly have
\begin{align*}
    \mathrm{Read}\big[ \mathrm{TF}_{\btheta^{(2)}}(  \mathrm{TF}_{\btheta^{(1)}}( \Xb ) ) \big] = \hat\xb_q +  \sbb,
\end{align*}
where $\hat\xb_q = [\mathbf{0}_{(m_0-1)d}^\top,-\hat\xb_{m_0 q}^\top, \mathbf{0}_{(2M-m_0)d}^\top, \mathbf{1}_{d} ^\top]^\top$ with $\hat\xb_{m_0 q} =  \pb^{\mathrm{MLE}}_{m_0}$, 
% \begin{align*}
%     \hat\xb_{m_0 q} = \hat\xb_{m_0 q} = \pb^{\mathrm{MLE}}_{m_0} \sum_{i\in[N]} \xb_{m_0i} \frac{ \ind[  \xb_{mi} = \xb_{mq} \text{ for all }m\in \cP(m_0) ] }{| \{ i\in [N]: \xb_{mi} = \xb_{mq} \text{ for all }m\in \cP(m_0) \}|},
% \end{align*}
and $ \| \sbb \|_\infty \leq \epsilon / [(2M+1)d] $, $\sum_{i=(m_0-1) d + 1)}^{m_0d} \sbb_i = 0$.
Therefore, setting $\Ab = [ \mathbf{0}_{d\times(m_0-1)d },-\Ib_{d\times d}, \mathbf{0}_{d\times (2M-m_0+1)d}]$, we obtain
\begin{align*}
    \Ab \mathrm{Read}\big[ \mathrm{TF}_{\btheta^{(2)}}(  \mathrm{TF}_{\btheta^{(1)}}( \Xb ) ) \big] = \Ab \hat\xb_q + \Ab \sbb =  \pb^{\mathrm{MLE}}_{m_0} + \Ab \sbb.
\end{align*}
By definition, it is clear that $\| \Ab \sbb\|_\infty \leq \epsilon/d$ and $\sum_{i=1}^d [\Ab \sbb]_i = 0$. This implies that $\Ab \mathrm{Read}\big[ \mathrm{TF}_{\btheta^{(2)}}(  \mathrm{TF}_{\btheta^{(1)}}( \Xb ) ) \big]$ is a probability vector, and finishes the proof.  
\end{proof}

\section{Conclusion}
% Here we discuss the contribution, limitation, broader impacts and future works of this paper.
% \paragraph{Contributions.}
In this paper, we theoretically analyze transformer's capability to learn Bayesian networks in-context in an autoregressive fashion. We show that there exists a simple construction of transformer such that it can (1) estimate the conditional probabilities of the Bayesian network in-context, and (2) autoregressively generate a new sample based on the estimated conditional probabilities. This sheds light on the potential of transformers in probabilistic reasoning and their applicability in various machine learning tasks involving structured data. Empirically, we provide extensive experiments to show that transformers are indeed capable of learning Bayesian networks and generalize well on unseen probability distributions, verifying our theoretical construction. Our theoretical and experimental results provide not only greater insights on the understanding of transformers, but also practical guidance in training transformers on Bayesian networks.

There are still multiple important aspects which this paper does not cover. First of all, our current theoretical result only demonstrates the \textit{expressive power} of transformers in the sense that a good transformer model with reasonable weights exist. Our result does not directly cover whether such a transformer can indeed be obtained through training. Our experiments indicate a positive answer to this question, making theoretical demonstrations a promising future work direction. Moreover, our current analysis does not take the number of heads into consideration. As is discussed in \citet{nichani2024transformers}, multi-head attention may play an important role when learning Bayesian networks with complicated network structures. Studying the impact of multi-head attention is another important future work direction. 
% In this paper, we theoretically analyze transformer's capability to learn Bayesian networks in-context in an autoregressive fashion.
% We show that there exists a simple construction of transformer such that it can (1) estimate the conditional probabilities of the Bayesian network in-context, and 
% (2) autoregressively generate a new sample based on the estimated conditional probabilities.
% This sheds lights on \textbf{XXX} and \textbf{YYY}.
% Empirically, we provide extensive experiments to show that transformers are indeed capable of learning Bayesian networks and generalize well on unseen probability distributions, verifying our theoretical construction.
% Our theoretical and experimental results provide not only greater insights on the understanding of transformers, but also provide practical guidance in training transformers on Bayesian networks.

% \paragraph{Limitations.}
% Theoretically, our limitation includes AAA, BBB.

% \paragraph{Future Works.}
% This paper mainly discuss the setting of learning Bayesian networks with transformers, a potential direction is to bridge our discovery with some real-world scenarios such as language, vision modeling.

% \paragraph{Broader Impact.}
% \subsubsection*{Acknowledgments}
% Use unnumbered third level headings for the acknowledgments. All
% acknowledgments, including those to funding agencies, go at the end of the paper.

\bibliography{reference}
\bibliographystyle{agsm}

\newpage

\appendix

% \stopcontents
% \startcontents[sections]
% \printcontents[sections]{ }{1}{}

\begin{center}
{\large\bf SUPPLEMENTARY MATERIAL}
\end{center}

{
\setlength{\parskip}{-0em}
\startcontents[sections]
\printcontents[sections]{ }{1}{}
}

\section{Proofs}\label{appendix:proofs}
In this section, we give the proofs of Lemmas~\ref{lemma:first_layer} nad \ref{lemma:second_layer}. 
\subsection{Proof of Lemma~\ref{lemma:first_layer}}
The proof of Lemma~\ref{lemma:first_layer} is given as follows. 
\begin{proof}[Proof of Lemma~\ref{lemma:first_layer}]
    Let $\Vb^{(1)} = \mathbf{0}_{(2M+1)d\times (2M+1)d}$, $\Kb^{(1)} = \Qb^{(1)} = \mathbf{0}_{Md\times (2M+1)}$. Then clearly we have 
    \begin{align*}
        \mathrm{Attn}_{\Vb^{(1)}, \Kb^{(1)}, \Qb^{(1)}} (\Xb) = \Xb. 
    \end{align*}
    Moreover, let $\Ab = [ \Ab_{ij} ]_{M\times (M+2)} \in \RR^{Md\times (M+1)d}$ be a $M\times (M+1)$ block matrix where
    \begin{align*}
        \Ab_{ij} = \left\{ \begin{aligned}
&\Ib_{d\times d}, &&\text{if } j\leq M \text{ and } i\in \{j\} \cup \cP(j);
\\
&\mathbf{0}_{d\times d}, &&\text{otherwise}.
\end{aligned}
\right.
    \end{align*}
Then, let
    $\Wb_2^{(1)} = -\Ib_{(2M+1)d\times (2M+1)d}$, and 
    \begin{align*}
        \Wb_1^{(1)} = \begin{bmatrix}
            \Ib_{Md\times Md} & -2 \Ab \\
            \mathbf{0}_{(M+1)d\times Md} & 
            \mathbf{0}_{(M+1)d\times (M+1)d} 
        \end{bmatrix}.
    \end{align*}
    We note that the above definintion does not rely on any specific value of $m_0$. By definition, we can directly verify that
    \begin{align*}
        \Wb_1^{(1)} \Xb = \begin{bmatrix}
        \check\xb_{11} & \check\xb_{12} & \cdots & \check\xb_{1N} & \check\xb_{1q}  \\
        \check\xb_{21} & \check\xb_{22} & \cdots & \check\xb_{2N} & \check\xb_{2q}\\
        \vdots & \vdots & & \vdots & \vdots\\
        \check\xb_{M1} & \check\xb_{M2} & \cdots & \check\xb_{MN}  & \check\xb_{Mq}\\
        \mathbf{0}_{(M+1)d} & \mathbf{0}_{(M+1)d}  & \cdots & \mathbf{0}_{(M+1)d} & \mathbf{0}_{(M+1)d}
    \end{bmatrix},
    \end{align*}
    where $\check\xb_{mi} = \xb_{mi} - 2\mathbf{1}\cdot \ind[ m \in \{ m_0 \}\cup \cP(m_0) ]$. 
    Now since $\xb_{mi}$, $m\in[M]$, $i\in[N]$ are all one-hot vectors (and therefore have non-negative entries between zero and one), we see that the entries of $\check\xb_{mi}$ are strictly negative if and only if $m \in \{ m_0 \}\cup \cP(m_0)$. Therefore, by the definition of the ReLU activation function, we have
    \begin{align*}
        \sigma(\Wb_1^{(1)} \Xb ) = \begin{bmatrix}
        \overline\xb_{11} & \overline\xb_{12} & \cdots & \overline\xb_{1N} & \overline\xb_{1q}  \\
        \overline\xb_{21} & \overline\xb_{22} & \cdots & \overline\xb_{2N} & \overline\xb_{2q}\\
        \vdots & \vdots & & \vdots & \vdots\\
        \overline\xb_{M1} & \overline\xb_{M2} & \cdots & \overline\xb_{MN}  & \overline\xb_{Mq}\\
        \mathbf{0}_{(M+1)d} & \mathbf{0}_{(M+1)d}  & \cdots & \mathbf{0}_{(M+1)d} & \mathbf{0}_{(M+1)d}
    \end{bmatrix},
    \end{align*}
    %     \begin{bmatrix}
    %     \xb_{11} \cdot \ind\{1\notin \cP(m_0)\} & \cdots & \xb_{1N} \cdot \ind\{1\notin \cP(m_0)\}& \xb_{1q} \cdot \ind\{1\notin \cP(m_0)\} \\
    %     \xb_{21}\cdot \ind\{2\notin \cP(m_0)\} &  \cdots & \xb_{2N}\cdot \ind\{2\notin \cP(m_0)\} & \xb_{2q}\cdot \ind\{2\notin \cP(m_0)\}\\
    %     \vdots & & \vdots & \vdots\\
    %     \xb_{M1}\cdot \ind\{M\notin \cP(m_0)\} &  \cdots & \xb_{MN}\cdot \ind\{M\notin \cP(m_0)\}  & \xb_{Mq}\cdot \ind\{M\notin \cP(m_0)\}\\
    %     \mathbf{0}_{Md\times 1} &  \cdots & \mathbf{0}_{Md\times 1} &\mathbf{0}_{Md\times 1}
    % \end{bmatrix}.
    where  $\overline\xb_{mi} = \xb_{mi}\cdot \ind[ m \notin \{ m_0 \}\cup \cP(m_0) ]$. 
    Therefore, by $\Wb_2^{(1)} = -\Ib_{2Md\times 2Md}$, we have
    \begin{align*}
        \mathrm{TF}_{\btheta^{(1)} }( \Xb ) &= \mathrm{FF}_{\Wb_1^{(1)} ,\Wb_2^{(1)} } [ \mathrm{Attn}_{\Vb^{(1)} , \Kb^{(1)} , \Qb^{(1)} } (\Xb) ]
        = \mathrm{FF}_{\Wb_1^{(1)} ,\Wb_2^{(1)} } (\Xb)\\
        &=  \Xb + \Wb_2^{(1)}  \sigma( \Wb_1^{(1)}  \Xb ) = \Xb -   \sigma( \Wb_1^{(1)}  \Xb )\\
        &= \begin{bmatrix}
        \tilde\xb_{11} & \tilde\xb_{12} & \cdots & \tilde\xb_{1N} & \tilde\xb_{1q}  \\
        \tilde\xb_{21} & \tilde\xb_{22} & \cdots & \tilde\xb_{2N} & \tilde\xb_{2q}\\
        \vdots & \vdots & & \vdots & \vdots\\
        \tilde\xb_{M1} & \tilde\xb_{M2} & \cdots & \tilde\xb_{MN}  & \tilde\xb_{Mq}\\
        \pb & \pb  & \cdots & \pb &\mathbf{p}_{q}
    \end{bmatrix},
    \end{align*}
    % \begin{align*}
    %     \mathrm{TF}_{\btheta^{(1)}}( \Xb ) = \tilde{\Xb} := \begin{bmatrix}
    %     \tilde\xb_{11} & \tilde\xb_{12} & \cdots & \tilde\xb_{1N} & \tilde\xb_{1q}  \\
    %     \tilde\xb_{21} & \tilde\xb_{22} & \cdots & \tilde\xb_{2N} & \tilde\xb_{2q}\\
    %     \vdots & \vdots & & \vdots & \vdots\\
    %     \tilde\xb_{M1} & \tilde\xb_{M2} & \cdots & \tilde\xb_{MN}  & \tilde\xb_{Mq}\\
    %     \pb & \pb  & \cdots & \pb &\mathbf{p}_{q}
    % \end{bmatrix},
    % \end{align*}
    where 
    \begin{align*}
        \tilde\xb_{mi} = \left\{ \begin{aligned}
&\xb_{mi}, &&\text{if } m \in \{ m_0 \}\cup \cP(m_0);\\
&\mathbf{0}, &&\text{otherwise}.
\end{aligned}
\right.,~~\tilde\xb_{mq} = \left\{ \begin{aligned}
&\xb_{mq}, &&\text{if } m \in \{ m_0 \}\cup \cP(m_0);\\
&\mathbf{0}, &&\text{otherwise}.
\end{aligned}
\right.
\end{align*}
for all $i\in[N]$.
This finishes the proof.
\end{proof}

\subsection{Proof of Lemma~\ref{lemma:second_layer}}
We present the proof of Lemma~\ref{lemma:second_layer} as follows. 

\begin{proof}[Proof of Lemma~\ref{lemma:second_layer}]
    Clearly, by the definition of the $\mathrm{Read}(\cdot)$ function, only the last column of the output of $\mathrm{TF}_{\btheta^{(2)}}$ matters. Since the last column of the output of $\mathrm{TF}_{\btheta^{(2)}}$ only relies on the last column of $\mathrm{Attn}_{\Vb^{(2)}, \Kb^{(2)}, \Qb^{(2)}} (\tilde\Xb) $, we focus on the last column of $\mathrm{softmax}[ (\Kb \Xb)^\top  (\Qb \Xb)  ]$, which is $\mathrm{softmax}[ (\Kb \tilde\Xb)^\top  (\Qb \tilde\xb_q)  ]$, where $\tilde\xb_q = [\tilde\xb_{1q}^\top, \ldots, \tilde\xb_{Mq}^\top, \pb_q^\top ]^\top$. 
    Denote $c = \log( d/ \epsilon )$. Let $\Wb_1^{(2)} =  \Wb_2^{(2)} = \mathbf{0}_{(2M+1)d\times (2M+1)d}$,  $\Vb^{(2)} = -\Ib_{(2M+1)d\times (2M+1)d}$, and %$\Kb^{(2)} = [ \Ib_{Md\times Md} , \mathbf{0}_{Md \times (M+1)d } ]$
    %$\Qb^{(1)} = \mathbf{0}_{Md\times (2M+1)}$. Then clearly we have 
    \begin{align*}
        \Kb^{(2)} = \sqrt{c}\cdot \begin{bmatrix}
            \Ib_{Md\times Md} & \mathbf{0}_{Md\times Md} & \mathbf{0}_{Md\times d} \\
            \mathbf{0}_{d\times Md} & \mathbf{0}_{d\times Md} & \Ib_{d\times d}
        \end{bmatrix}, \quad \Qb^{(2)} = \sqrt{c}\cdot \begin{bmatrix}
            \Ib_{Md\times Md} & \mathbf{0}_{Md\times Md} & \mathbf{0}_{Md\times d} \\
            \mathbf{0}_{d\times Md} & \mathbf{0}_{d\times Md} & -\Ib_{d\times d}
        \end{bmatrix}.
    \end{align*}
Then we have 
\begin{align*}
    \Kb^{(2)} \tilde\Xb = \sqrt{c}\cdot \begin{bmatrix}
        \tilde\xb_{11} & \tilde\xb_{12} & \cdots & \tilde\xb_{1N} & \tilde\xb_{1q}  \\
        \tilde\xb_{21} & \tilde\xb_{22} & \cdots & \tilde\xb_{2N} & \tilde\xb_{2q}\\
        \vdots & \vdots & & \vdots & \vdots\\
        \tilde\xb_{M1} & \tilde\xb_{M2} & \cdots & \tilde\xb_{MN}  & \tilde\xb_{Mq}\\
        \mathbf{0}_{d} & \mathbf{0}_{d}  & \cdots & \mathbf{0}_{d} &  \mathbf{1}_{d}
    \end{bmatrix}, \quad \Qb^{(2)}\tilde\xb_q = \sqrt{c}\cdot \begin{bmatrix}
        \tilde\xb_{1q}  \\
        \tilde\xb_{2q}\\
        \vdots\\
        \tilde\xb_{Mq}\\
        - \mathbf{1}_{d}
    \end{bmatrix}.
\end{align*}
Recall the definition that
    \begin{align*}
        \tilde\xb_{mi} = \left\{ \begin{aligned}
&\xb_{mi}, &&\text{if } m \in \{ m_0 \}\cup \cP(m_0);\\
&\mathbf{0}, &&\text{otherwise}.
\end{aligned}
\right.,~~\tilde\xb_{mq} = \left\{ \begin{aligned}
&\xb_{mq}, &&\text{if } m \in \{ m_0 \}\cup \cP(m_0);\\
&\mathbf{0}, &&\text{otherwise}.
\end{aligned}
\right.
\end{align*}
for all $i\in[N]$. Therefore, for $i\in [N]$, we have
\begin{align*}
    (\Kb \tilde\xb_i)^\top  (\Qb \tilde\xb_q) &= c\cdot\sum_{m=1}^M \la \tilde\xb_{mi}, \tilde\xb_{mq} \ra\\
    &= c\cdot\sum_{m=1}^M \la \xb_{mi}, \xb_{mq} \ra \ind[ m \in \{ m_0 \}\cup \cP(m_0) ] \\
    &= c\cdot| \{ m\in \{ m_0 \}\cup \cP(m_0) :  \xb_{mi} = \xb_{mq}  \} |\\
    &= c\cdot| \{ m\in \cP(m_0) :  \xb_{mi} = \xb_{mq}  \} |,
\end{align*}
where the last equation is due to the fact that $\xb_{m_0q} = \mathbf{0}$, as it has not been sampled. 
Similarly, we also have
\begin{align*}
    (\Kb \tilde\xb_q)^\top  (\Qb \tilde\xb_q) = c\cdot\sum_{m=1}^M \la \tilde\xb_{mq}, \tilde\xb_{mq}\ra  -cd = c\cdot | \cP(m_0) | - cd.
\end{align*}

Now  denote $\cI(m_0) =  \{ i\in [N]: \xb_{mi} = \xb_{mq} \text{ for all }m\in \cP(m_0) \}$.  
Then for any $i \in \cI(m_0)$ (by assumption, this set is not empty), we have
\begin{align*}
    | \{ m\in \cP(m_0) :  \xb_{mi} = \xb_{mq}  \} | = |\cP(m_0)|.
\end{align*}
Therefore, 
for any $i \in\cI(m_0)$ and any $i'\notin \cI(m_0)$, we have
\begin{align*}
     (\Kb \tilde\xb_i)^\top  (\Qb \tilde\xb_q) - (\Kb \tilde\xb_{i'})^\top  (\Qb \tilde\xb_q) \geq c\cdot |\cP(m_0)| - c\cdot ( |\cP(m_0) | -1 ) = c.
\end{align*}
Moreover, 
\begin{align*}
    (\Kb \tilde\xb_i)^\top  (\Qb \tilde\xb_q) - (\Kb \tilde\xb_{q})^\top  (\Qb \tilde\xb_q)= c\cdot |\cP(m_0)| - c\cdot |\cP(m_0)| + cd = cd.
\end{align*}
Therefore, by $c = 3\log( MdN / \epsilon)$ we have
\begin{align*}
    \Bigg\|  \mathrm{softmax}[ (\Kb \tilde\Xb)^\top  (\Qb \tilde\xb_q) ] - \frac{1}{| \cI(m_0) |} \sum_{i\in \cI(m_0)} \eb_i \Bigg\|_\infty \leq\frac{\epsilon}{(2M+1)d}.
\end{align*}
Now by the choice that $\Vb^{(2)} = -\Ib_{(2M+1)d\times (2M+1)d}$, we have
\begin{align*}
    \mathrm{Read}\big[\mathrm{Attn}_{\Vb^{(2)}, \Kb^{(2)}, \Qb^{(2)}} (\tilde\Xb) \big] &= \tilde\xb_q + \Vb^{(2)} \tilde\Xb \mathrm{softmax}[ (\Kb^{(2)} \tilde\Xb)^\top  (\Qb^{(2)} \tilde\xb_q) ]\\
    & = \tilde\xb_q -  \frac{1}{| \cI(m_0) |} \sum_{i\in \cI(m_0)}\tilde\Xb \eb_i + \sbb
    % &= \hat\xb_q +  \sbb,
\end{align*}
where $\sbb \in \RR^{(2M+1)d}$ satisfies $ \| \sbb \|_\infty \leq \epsilon / [(2M+1)d] $ and $\sum_{i=(m_0-1) d + 1)}^{m_0d} \sbb_i = 0$. Now note that  (i) $\tilde\xb_{mi}$'s and $\tilde\xb_{mq}$'s are all zero except for $m\in \{m_0\}\cup \cP(m_0)$, (ii) for all $i\in \cI(m_0)$, and $m\in \cP(m_0)$, $\xb_{mi} = \xb_{mq}$. Therefore, on the right-hand side of the equation above, most of the terms are actually canceled when calculating the difference $\tilde\xb_q -  \frac{1}{| \cI(m_0) |} \sum_{i\in \cI(m_0)}\tilde\Xb \eb_i$. We have 
\begin{align*}
    \mathrm{Read}\big[\mathrm{Attn}_{\Vb^{(2)}, \Kb^{(2)}, \Qb^{(2)}} (\tilde\Xb) \big]
    &= \hat\xb_q +  \sbb,
\end{align*}
where 
$\hat\xb_q = [\mathbf{0}_{(m_0-1)d}^\top,-\hat\xb_{m_0 q}^\top, \mathbf{0}_{(2M-m_0)d}^\top, \mathbf{1}_{d} ^\top]^\top$, and
\begin{align*}
    \hat\xb_{m_0 q} &= \frac{1}{| \cI(m_0) |} \sum_{i\in \cI(m_0)}\tilde\xb_{m_0i} \\
    &= \frac{1}{| \cI(m_0) |} \sum_{i\in \cI(m_0)}\xb_{m_0i} \\
    &=  \sum_{i\in[N]} \xb_{m_0i} \frac{ \ind[  \xb_{mi} = \xb_{mq} \text{ for all }m\in \cP(m_0) ] }{| \{ i\in [N]: \xb_{mi} = \xb_{mq} \text{ for all }m\in \cP(m_0) \}|}.
\end{align*}
Now by $\Wb_1^{(2)} =  \mathbf{0}_{(2M+1)d\times (2M+1)d} $, $\Wb_2^{(2)} =\mathbf{0}_{(2M+1)d\times (2M+1)d}$, we have
\begin{align*}
    \mathrm{Read}\big[ \mathrm{TF}_{\btheta^{(2)}}( \tilde\Xb ) \big] &= \mathrm{Read}\big[ \mathrm{FF}_{\Wb_1^{(2)},\Wb_2^{(2)}} [ \mathrm{Attn}_{\Vb^{(2)}, \Kb^{(2)}, \Qb^{(2)}} (\tilde\Xb) ] \big]\\
    &= \mathrm{FF}_{\Wb_1^{(2)},\Wb_2^{(2)}}\big\{ \mathrm{Read}\big[\mathrm{Attn}_{\Vb^{(2)}, \Kb^{(2)}, \Qb^{(2)}} (\tilde\Xb) \big] \big\}\\
    &= \mathrm{FF}_{\Wb_1^{(2)},\Wb_2^{(2)}}( \hat\xb_q +  \sbb )\\
    &=  \hat\xb_q +  \sbb.
\end{align*}
This finishes the proof.
\end{proof}

% \newpage

\section{Additional Experiments}\label{appendix:additional-exp}
Here we conduct a hyperparameter analysis to see whether transformers are sensitive on certain hyperparameters.
It is also a more complete result of some experimental sections in main paper.
We analyze three hyperparameters:
\begin{itemize}
    \item Number of layers
    \item Number of attention heads
    \item $N_{\text{train}}$
\end{itemize}
We perform these analysis on general graph and select variable 0, 2, 3 to evaluate.
The reasoning behind this selection is to demonstrate 3 different properties of these variables.
For variable 0, it is a random variable without any parents, so modeling it is

\subsection{The Effect of Layers.}
Here we evaluate transformers with $\{1, 2, 6\}$ layers on general graph.
Overall, we want to observe whether the number of layers affect transformer's ability to learn MLE based on Bayesian network architecture.
The result is in Figure~\ref{fig:layer-impact}.

\begin{figure}[h]
    \centering
    \includegraphics[width=0.95\linewidth]{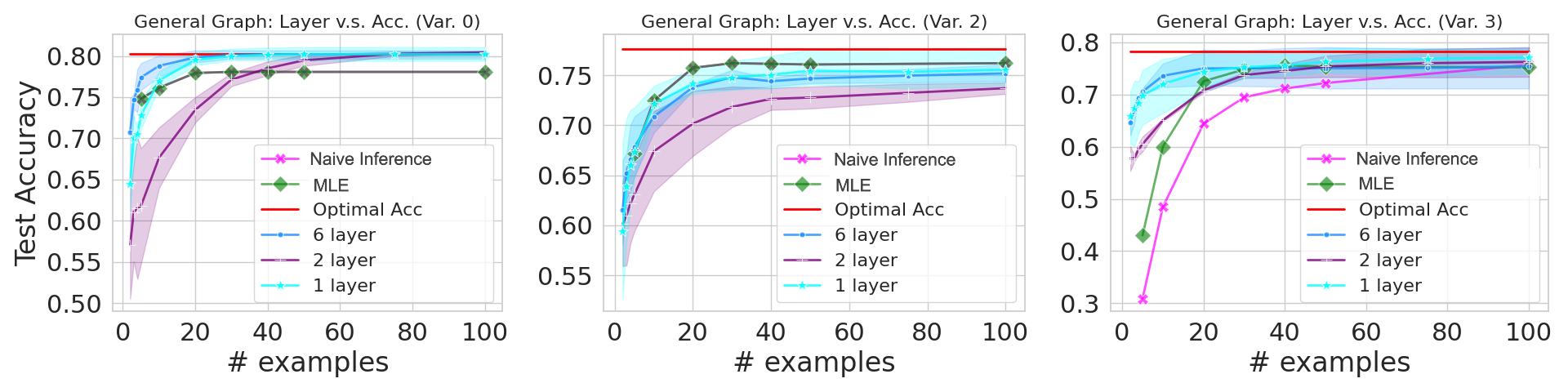}
    \vspace{-1em}
    \caption{\textbf{Evaluation of transformers with different layer on general graph.
    Left to right:
    variable 0, 2, 3.
    }
    We set the hidden dimension to $256$, number of heads to $8$ for all transformers.
    The result is the average taken over 5 runs.
    We observe that even the 2-layer transformer performs worse and presents larger variance, all transformers have similar behavior on this task.
    }
    \label{fig:layer-impact}
\end{figure}

\subsection{The Effect of Heads.}
Here we evaluate transformers with $\{1, 2, 4, 8\}$ attention heads on general graph.
Overall, we want to observe whether the number of attention heads affect transformer's ability to learn MLE based on Bayesian network architecture.
The result is in Figure~\ref{fig:head-impact}.
Empirically, we do not discover a significant impact of attention heads on models performance in our case study.
% As discussed in Section~\ref{subsection:optimal}, while we do not observe such an impact, it might due to the fact that the general graph structure is too simple to reflect such an architecture bottleneck.

\begin{figure}[h]
    \centering
    \includegraphics[width=0.95\linewidth]{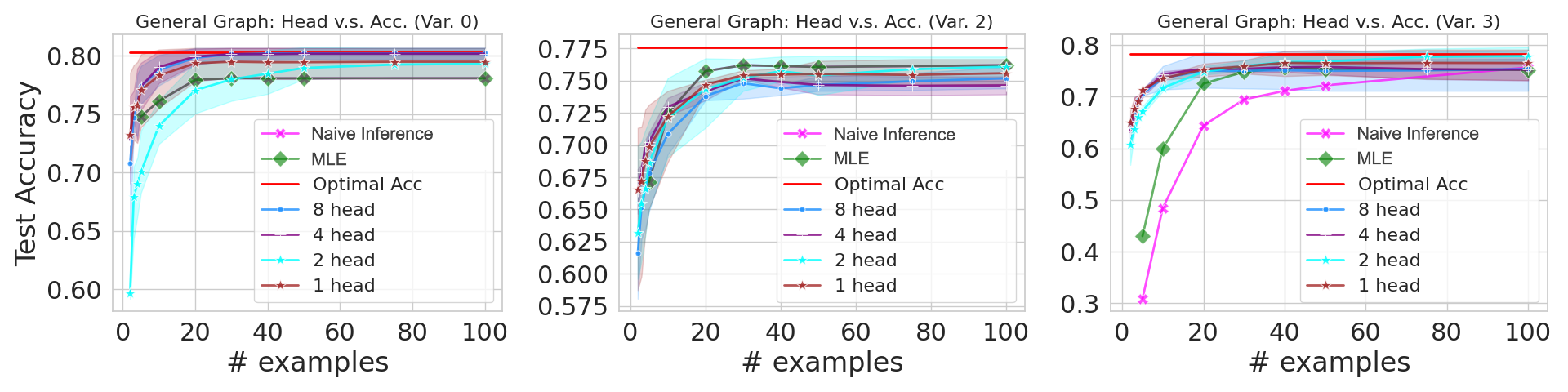}
    \caption{\textbf{Evaluation of transformers with different number of attention heads on general graph.
    Left to right:
    variable 0, 2, 3.
    }
    We set the hidden dimension to $256$, layer to $6$ for all transformers.
    The result is the average taken over 5 runs.
    Similar to the above subsection, we also do not observe significant performance degradation when reducing the number of heads.
    Especially for variable 3, which highly requires the network structure to inference prediction, transformer with 1-head still performs similar with its other variants.
    }
    \label{fig:head-impact}
\end{figure}
\subsection{The Effect of $N$ during Training.}
Here we evaluate transformers with values of $N_{\text{train}}$ on general graph and tree.
We aim to test models generalization capability and evaluate whether models require certain size of $N_{\text{train}}$ to learn MLE based on Bayesian network architecture in-context.
\paragraph{General Graph.}
The convergence and inference results are in Figure~\ref{fig:graphs-convergence-gen-app-left}, Figure~\ref{fig:graphs-convergence-gen-app-right} and Figure~\ref{fig:graphs-inference-gen-app}, respectively.
For the convergence result, we observe that models trained on large $N_{\text{train}}$ is able to generalize well on both $N_{\text{test}} = 20, 50$ (accuracy above 0.7).
However, for models trained under small $N_{\text{train}}$, they do not converge well and also do not generalize well on testset (accuracy below 0.7).
For the inference result, we see that models trained on large $N_{\text{train}}$ is capable of performing MLE based on Bayesian network architecture.
But models trained under small $N_{\text{train}}$ struggle to utilize the network structure to predict.
A potential reason is smaller $N_{\text{train}}$ is not sufficient to approximate the ground truth probability distribution well.
The result indicates that a sufficient large $N_{\text{train}}$ is critical for transformers to learn MLE based on Bayesian network architecture in-context, providing practical insights on real-world scenarios and downstream tasks.

\begin{figure}[t]
    \centering
    \includegraphics[width=\linewidth]{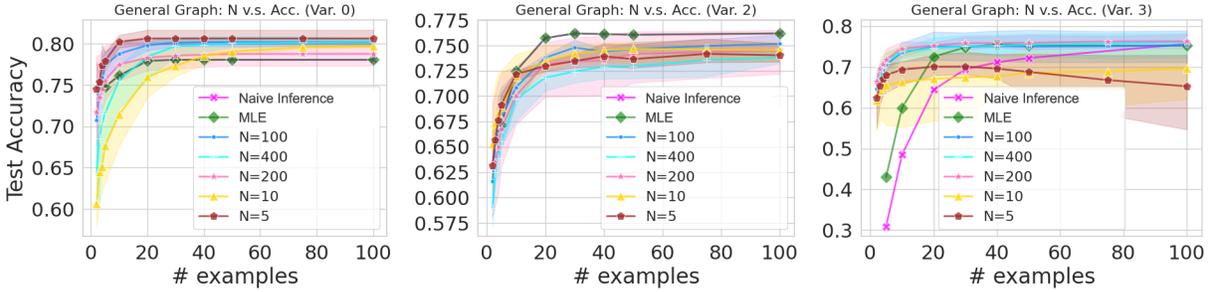}
    \vspace{-2em}
    \caption{\textbf{Left to right:
    Transformer's performance on general graph variable 0, 2, 3.
    }
    For variable 0, 2, all models are able to model the variable distributions well.
    Interestingly, for variable 3, transformers trained under $N_{\text{train}} = [5, 10]$ are not capable of predicting it well.
    Moreover, its performance even worse than naive inference for large $N_{\text{test}}$.
    The result indicates that a sufficient size of $N_{\text{train}}$ is necessary for transformers to learn the network structure.
    }
    \label{fig:graphs-inference-gen-app}
\end{figure}

\begin{figure}[t]
    \centering
    \includegraphics[width=\linewidth]{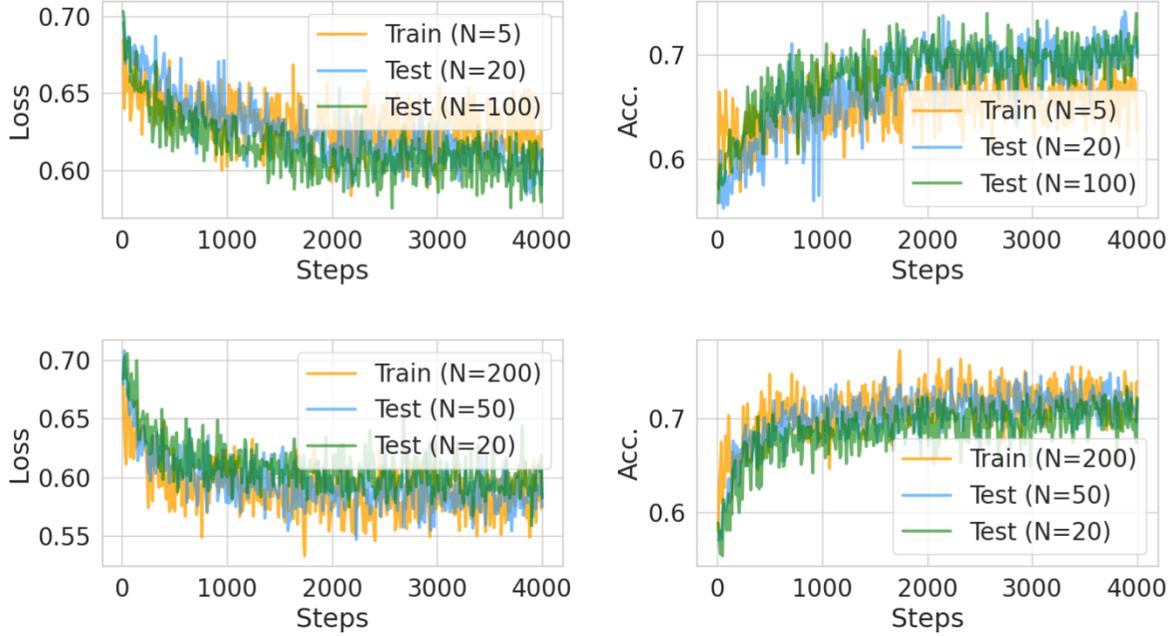}
    \vspace{-2em}
    \caption{\textbf{Convergence result on general graph for $N_{\text{train}} \in \{ 5, 10, 200, 400\}$.
    }
    }
    \label{fig:graphs-convergence-gen-app-left}
\end{figure}

\begin{figure}[t]
    \centering
    \includegraphics[width=\linewidth]{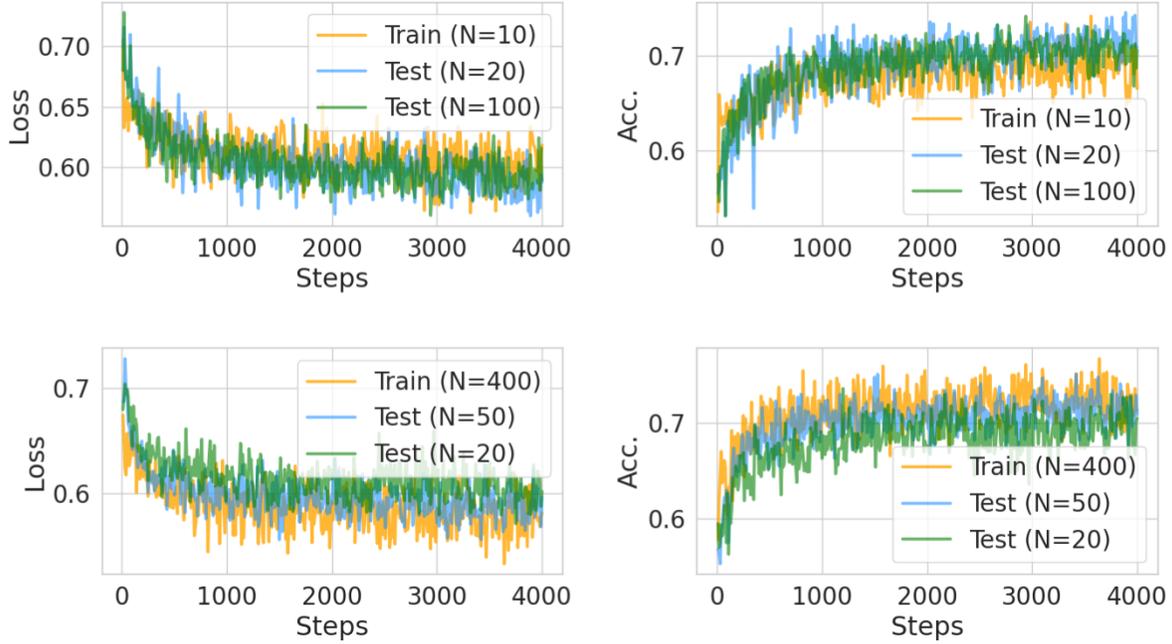}
    \vspace{-2em}
    \caption{\textbf{Left: Convergence result on general graph for $N_{\text{train}} \in \{ 5, 10, 200, 400\}$.
    Right: Convergence result on tree for $N_{\text{train}} \in \{ 5, 10, 200, 400\}$.
    }
    Here we observe an obvious contrast between models trained on different $N_{\text{train}}$.
    For smaller $N_{\text{train}}$, model performance on training dataset is lower than testset.
    For larger $N_{\text{train}}$, we observe the opposite.
    We believe this is due to the fact that smaller $N_{\text{train}}$ does not provide sufficient sample size to recover the probability distribution well.
    }
    \label{fig:graphs-convergence-gen-app-right}
\end{figure}

\paragraph{Tree.}\label{generalization-tree}
The results are demonstrated in  Figure~\ref{fig:tree-convergence-gen} and Figure~\ref{fig:tree-convergence-appendix}.
Overall, we observe that transformers fail to perform MLE based on Bayesian network architecture when $N_{\text{train}}=5$.
However, different from our results on general graph, $N_{\text{train}}=10$ seems to be sufficient for transformers to learn MLE based on Bayesian network architecture.
This result can be explained by the fact that modeling variable 4 only requires to focus on its single parent.
However, in general graphs, some variables have multiple parents, which prevents $N_{\text{train}}=10$ to recover the conditional probability distribution well.

\begin{figure}[h]
    \centering
    \includegraphics[width=\linewidth]{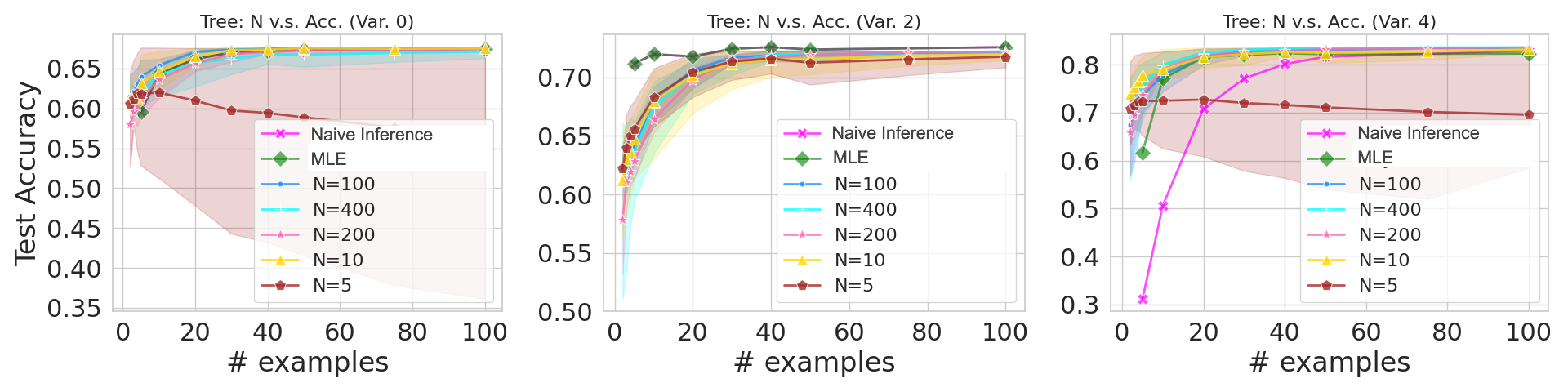}
    \vspace{-2em}
    \caption{\textbf{Generalization Analysis: Inference results on tree.
    }
    Similar to our results on graph, transformers trained on large $N_{\text{train}}$ generalize better than trained on smaller $N_{\text{train}}$.
    Especially with $N_{\text{train}} = 5$, transformers fail to even predict well  
    }
    \label{fig:tree-convergence-gen}
\end{figure}

\begin{figure}[thb]
    \centering
    \includegraphics[width=\linewidth]{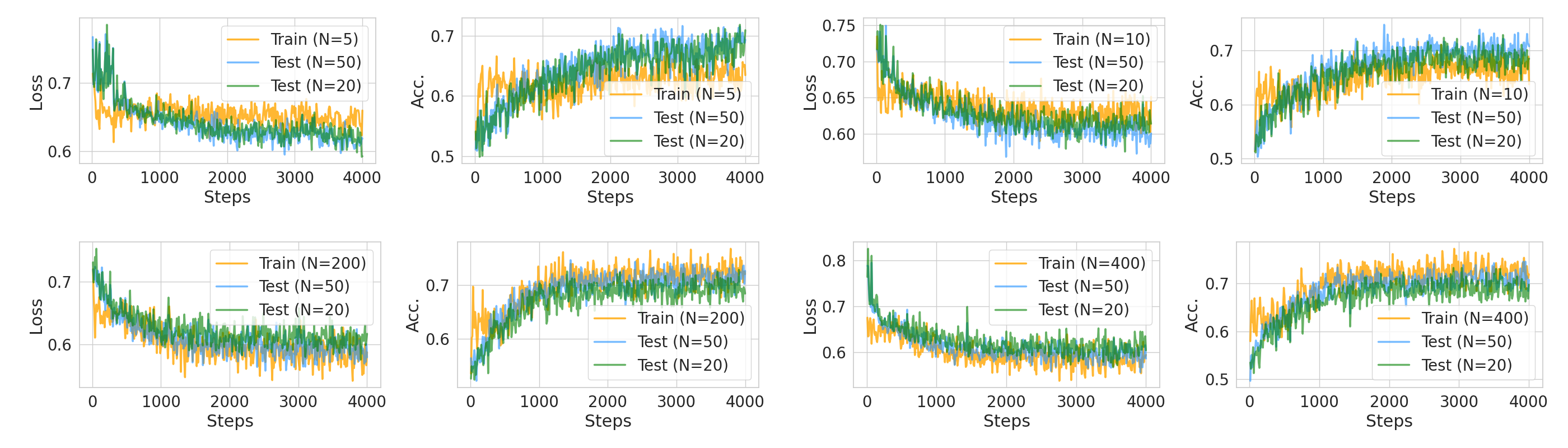}
    \vspace{-2em}
    \caption{\textbf{Generalization Analysis: Convergence result on tree.
    Top: $N_{\text{train}} \in \{ 5, 10\}$,
    Bottom: $N_{\text{train}} \in \{ 200, 400\}$
    }
    Similar to our results on graph, transformers trained on large $N_{\text{train}}$ generalize better than trained on smaller $N_{\text{train}}$.
    The gap between training and testset gets larger close to the end of training.
    }
    \label{fig:tree-convergence-appendix}
\end{figure}

\subsection{Additional Experiment for Categorical Distributions}\label{appendix:add-exp}
Here we conduct experiments on networks with categorical distributions, i.e. the number of possible outcome for each variable is more than 2.
We select the binary tree structure as example, and set the number of possible outcome for each variable as 3.
We report both the test accuracy and test F1 are evaluation metrics, the results are in Figure~\ref{fig:tree-acc-cate} and Figure~\ref{fig:tree-f1-cate}.
As a result, the input dimension of the transformer is $28$.
For all other hyperparameters, we follow Table~\ref{table:hyperparam}.

\begin{figure}[t]
    \centering
    \includegraphics[width=\linewidth]{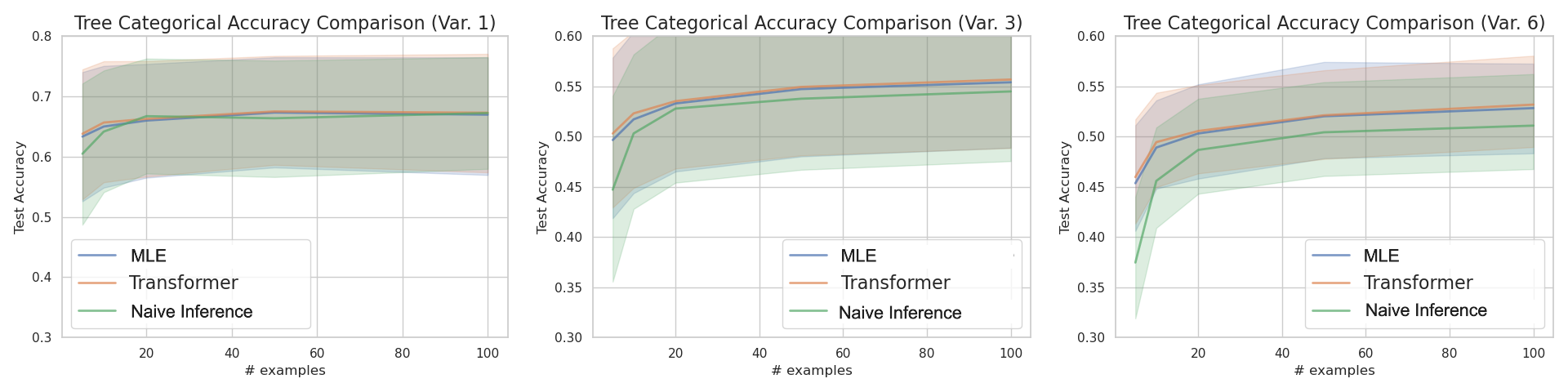}
    \caption{\textbf{Accuracy Comparison for the Tree Network with Categorical distribution.}
    In the figures, we are able to observe the test accuracy follows the same pattern comparing to the ones with binary distribution (Figure~\ref{fig:main-result}).
    The result shows that transformers are capable of learning the network structure and perform MLE based on it.
    }
    \label{fig:tree-acc-cate}
\end{figure}

\begin{figure}[t]
    \centering
    \includegraphics[width=\linewidth]{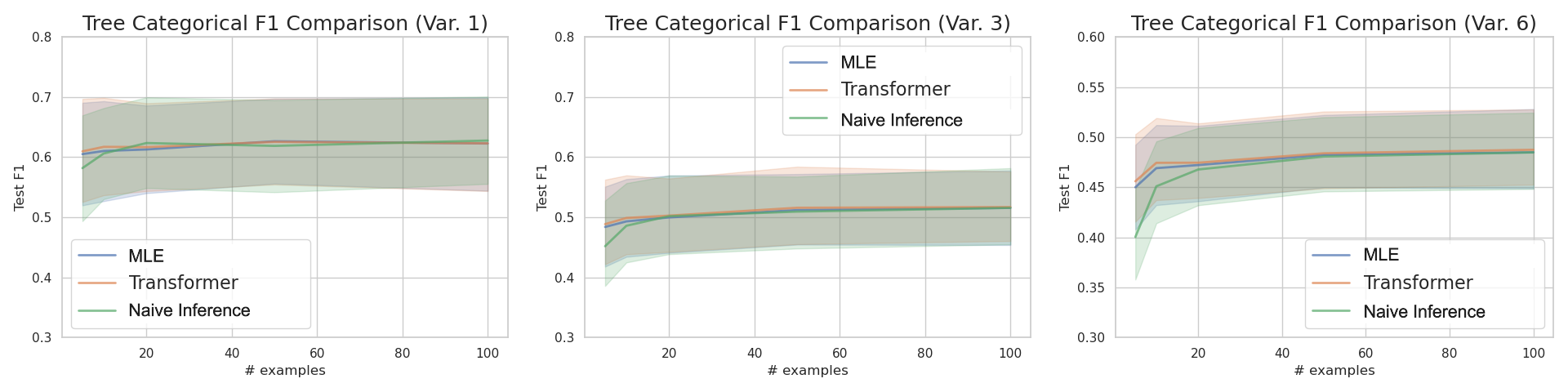}
    \caption{\textbf{F1 Score Comparison for the Tree Network with Categorical distribution.}
    Since we are handling the multi-class prediction, we also report the F1 score for all the baselines.
    Similar to what we observe in the accuracy result, we are also able to observe the test F1 follows the same pattern comparing to the ones with binary distribution (Figure~\ref{fig:main-result}).
    The result again confirms that transformers are capable of learning the network structure and perform MLE based on it.
    }
    \label{fig:tree-f1-cate}
\end{figure}

\section{Experimental Details}\label{appendix:exp-details}

\subsection{Synthetic Data Details}\label{experiment-details:data}
Here we provide visualizations of graphs structures we select in our experiments.
Arrows indicates the causal relationship between variables.
Specifically, the "general graph" contains variables with more than 1 parents, representing a more generalized case.
An interesting design of the general graph is its variable 2 and 3 are both governed by 2 parents.
However, modeling variable 2 can be done via naive inference while modeling variable 3 requires MLE based on Bayesian network architecture, giving us an opportunity to discover such property.
\begin{figure}[t]
    \centering
    \includegraphics[width=0.95\linewidth]{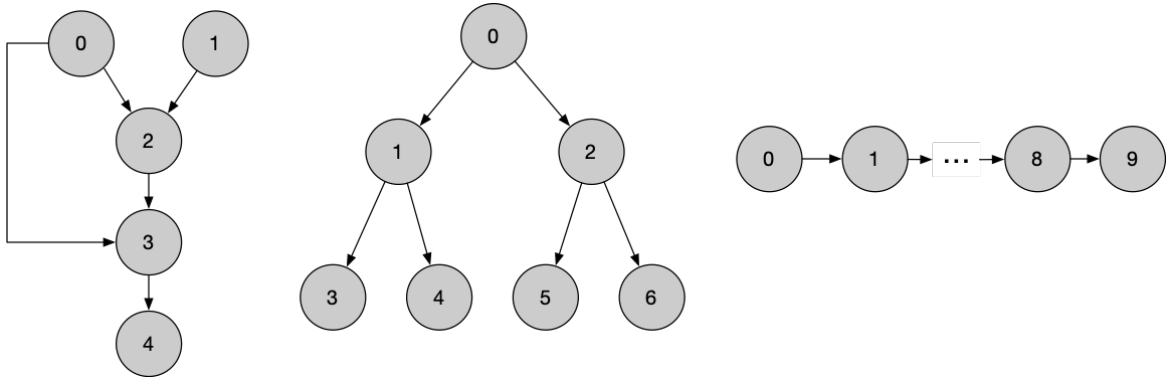}
    \caption{\textbf{Illustrations of graph structures in the experiments.}
    Left to right:
    general graph, tree and chain.
    The curriculum follows the number order of variables.
    Note that for general graph, variable 2, 3 both have 2 parents.
    However, for variable 2, the modeling process is identical for naive inference and MLE based on Bayesian network architecture.
    For variable 3, modeling it is different for naive inference and MLE based on network structure.
    }
    \label{fig:graphs-visualization-app}
\end{figure}

\subsection{Real World Dataset Details}\label{experiment-details:real-data}
For the ACSIncome, we preprocess the features with two major steps:
(1) Remove data points with N/A values.
(2) Merge categories within some dimensions of features.
For (2), the merged features are listed below.
Note that we use the original code name used in the ACSIncome for readers to reference them easily.
\paragraph{SCHL.}
This is a feature indicating individual's education level.
There were 24 categories in this feature before preprocessing, we merged them into 9 categories listed in Table~\ref{tab:educational_attainment}.
For the original categories, please refer to PUMS Documentation.
\begin{table}[h]
\centering
\caption{Merged Categories of SCHL Feature in ACSIncome and ACSPublicCoverage.}
\begin{tabular}{cl}
\toprule
\textbf{Value} & \textbf{Description} \\ \midrule
1            & No Formal Education \\ 
2              & Early Childhood Education \\ 
3              & Elementary School \\ 
4              & Middle School \\ 
4              & High School (Incomplete) \\ 
5              & High School Graduate or Equivalent \\ 
6              & College (No Degree) \\ 
7              & Associate’s Degree \\ 
8              & Bachelor’s Degree \\ 
9              & Advanced Degrees \\ 
\bottomrule
\end{tabular}
\label{tab:educational_attainment}
\end{table}

\paragraph{RELP.}
The RELP feature corresponds to the relationship of the individual to the reference person.
Note that the survey is conducted on household level.
Therefore, non-family residents such as roommates, unmarried partner are also included in this feature.
The merged categories are listed in Table~\ref{tab:relp}.

\begin{table}[h]
\centering
\caption{Merged Categories of RELP Feature in ACSIncome and ACSPublicCoverage.}
\begin{tabular}{cl}
\toprule
\textbf{Value} & \textbf{Description} \\ \midrule
1            & Reference Person \\ 
2              & Immediate Family \\ 
3              & Extended Family \\ 
4              & Non-Family Residents \\ 
4              & Group Quarters Population \\ 
5              & Unknown \\ 
\bottomrule
\end{tabular}
\label{tab:relp}
\end{table}

\paragraph{WKHP.}
This feature indicates the working hour per week of the individual.
The original feature is ranged from 1 to 99, where we categorized them with broader concepts.
The merged categories are listed in Table~\ref{tab:wkhp}.

\begin{table}[h]
\centering
\caption{Merged Categories of WKHP (working hour per week) Feature in ACSIncome and ACSPublicCoverage.}
\begin{tabular}{cl}
\toprule
\textbf{Value} & \textbf{Description} \\ \midrule
1            & No Work \\ 
2              & Part-Time (1 ~ 34 hrs) \\ 
3              & Full-Time Work (35 ~ 48 hrs) \\ 
4              & Overtime Work (49 ~ 98 hrs) \\ 
4              & Extremely High Hours (> 99 hrs) \\ 
5              & Unknown \\ 
\bottomrule
\end{tabular}
\label{tab:wkhp}
\end{table}

\subsection{Training Details}\label{experiment-details}
The hyperparameter table is in Table~\ref{table:hyperparam} and Table~\ref{table:hyperparam-real}.
We ran all experiments on RTX 2080 ti GPUs.
We use PyTorch 1.11 for all models, training and evaluation.
We use AdamW optimizer for training.
For curriculum design, we follow the variable order (index) to reveal variables.
For example, no future variables will be revealed until all of its precedents are revealed during training.
For tree structures, we use BFS to determine the curriculum.
We do not use any learning decay techniques as we find learned transformers perform better without it.
For each training step, we generate sampled $N_{\text{train}}+1$ examples randomly from 1 of our 50k candidate graphs to, to ensure models do not see repetitive data during training.
We log training and test loss every 50 steps, and save the checkpoint with lowest training loss.
For data generation, we use the Python package \texttt{pomegranate} for both constructing networks and sampling.

\subsection{Baselines}\label{experiment-baselines}
Here we explain the baselines used in our experiments.
We use an example for predicting the $M$-th variable of a query sequence $\xb_{1q}, \dots, \xb_{(M-1)q}$ with the first to $(M-1)$-th variable being observed.
Following the setup of in-context learning, we assume a set of $N$ groups context observations $X_{1i}, \dots, X_{Mi}$ for $i = 1, \dots, N$, denoting as $\mathcal{O}$.
Note that the two baselines are not capable of handling unseen features or labels.
Such a case will lead directly to assigning probability $0$ to all categories.

\paragraph{Naive Inference.}
The naive inference method predicts $\xb_{Mq}$ with the following probability distribution.
\[
P(\xb_{Mq} | \xb_{1q}, \dots, \xb_{(M-1)q}, \; \mathcal{O}) 
= 
\frac{\sum_{X_i \in \mathcal{O}} \mathbbm{1} \left(X_{1i} = \xb_{1q}, \dots, X_{(M-1)i} = \xb_{(M-1)q} \right) \cdot \mathbbm{1}(X_{M} = \xb_{Mq})}{\sum_{X_i \in \mathcal{O}} \mathbbm{1} \left(X_{1i} = \xb_{1q}, \dots, X_{(M-1)i} = \xb_{(M-1)q} \right)},
\]
where $\mathbbm{1}$ is the indicator function.

\paragraph{MLE based on True BN.}
For this method, we assume the network structure is known.
Thus, assuming the parents of the $M$-th variable are in the set of $\mathcal{P}$, where $\mathcal{P}_q$ are the parent nodes of $\xb_{Mq}$,
then the MLE method predicts $\xb_{Mq}$ with the following probability distribution.
% \[
% P(\xb_{Mq} | \xb_{1q}, \dots, \xb_{(M-1)q}, \; \mathcal{O}) 
% = 
% \frac{\sum_{X \in \mathcal{O}} \mathbbm{1} \left(X_1 = \xb_{1q}, \dots, X_{M-1} = \xb_{(M-1)q} \right) \cdot \mathbbm{1}(X_{M} = \xb_{Mq})}{\sum_{X \in \mathcal{O}} \mathbbm{1} \left(X_1 = \xb_{1q}, \dots, X_{M-1} = \xb_{(M-1)q} \right)},
% \]
\[
P(\xb_{Mq} | \mathcal{P}_q, \; \mathcal{O}) 
= \frac{\sum_{(X_i, \mathcal{P}_i) \in \mathcal{O}} \mathbbm{1}(\mathcal{P}_i = \mathcal{P}_q) \cdot \mathbbm{1}(X_{M} = \xb_{Mq})}{\sum_{(X_i, \mathcal{P}_i) \in \mathcal{O}} \mathbbm{1}(\mathcal{P}_i = \mathcal{P}_q)},
\]
where $\mathcal{P}_i$ is the parent of $X_{Mi}$.

\begin{table}[h]
        \centering
        \caption{Hyperparameters for Synthetic Data.
        }
        % \resizebox{ \textwidth}{!}{  
        \begin{tabular}{l*{4}{c}}
        \toprule
            \bf{parameter} & Chain & Tree & General \\ 
            \midrule
            optimizer & AdamW & AdamW & AdamW  \\
            steps & 10k & 3k & 2k \\
            learning rate  & $1\text{e-}4$ & $5\text{e-}4$ & $5\text{e-}4$ \\
            weight decay  & $1\text{e-}2$ & $5\text{e-}2$ & $5\text{e-}2$ \\
            batch size  & $64$ & $64$ & $64$  \\
            number of layers & $6$ & $6$ & $6$  \\
            loss function & Cross Entropy& Cross Entropy& Cross Entropy \\
            hidden dimension & $256$ & $256$& $256$ \\
            number of heads  & $8$ & $8$& $8$ \\
            number of examples (Train) $N$ & $100$ & $100$ & $100$  \\
            \bottomrule
        \end{tabular}
        \label{table:hyperparam}
    \end{table}

\begin{table}[h]
        \centering
        \caption{Hyperparameters for ACSIncome.
        }
        % \resizebox{ \textwidth}{!}{  
        \begin{tabular}{l*{1}{c}}
        \toprule
            \bf{parameter} &   \\ 
            \midrule
            optimizer & AdamW   \\
            steps & 40k  \\
            learning rate  & $1\text{e-}4$ \\
            weight decay  & $1\text{e-}2$ \\
            batch size  & $64$  \\
            number of layers & $6$ \\
            loss function & Cross Entropy\\
            hidden dimension & $256$  \\
            number of heads  & $8$  \\
            number of examples (Train) $N$ & $200$ \\
            \bottomrule
        \end{tabular}
        \label{table:hyperparam-real}
    \end{table}

\end{document}